\documentclass[letterpaper]{article} 
\usepackage{aaai25}  
\nocopyright
\usepackage{times}  
\usepackage{helvet}  
\usepackage{courier}  
\usepackage[hyphens]{url} 
\usepackage{graphicx} 
\usepackage{natbib} 
\usepackage{caption} 
\usepackage{algorithm}
\usepackage{algorithmic}

\usepackage{newfloat}
\usepackage{listings}
\usepackage{amsmath}
\usepackage{xcolor}
\usepackage{booktabs}
\usepackage{multirow}
\usepackage{amssymb}
\usepackage{amsthm}
\usepackage{enumitem}
\usepackage{subcaption}
\usepackage{float}

\newtheorem{proposition}{Proposition}
\newtheorem{assumption}{Assumption}

\newtheorem{theorem}{Theorem}

\newtheorem{problem}{Problem}

\DeclareCaptionStyle{ruled}{labelfont=normalfont,labelsep=colon,strut=off} 
\floatstyle{ruled}
\newfloat{listing}{tb}{lst}{}
\floatname{listing}{Listing}

\title{OpFlow: Learning Opportunity-Conditioned Choice Potentials for Robust OD Flow Prediction}

\author{
   Changjian Liu{\rm 1}, Yong Gao{\rm 1}\textsuperscript{*}, Yuqing Wang{\rm 1}, Leyi Su{\rm 1},\\
   Honglei Guo{\rm 2}, Zhiyang Wang{\rm 1}, Xiaoyu Wang{\rm 1}, Fan Zhang{\rm 1}} 
\affiliations{
    \textsuperscript{\rm 1} School of Earth and Space Sciences\\
    Peking University\\
    China, CN 100871 \\ 

    \textsuperscript{\rm 2} College of Artificial Intelligence\\
    Zhejiang University\\
    China, CN 310027 \\
}

\begin{document}

\maketitle

\begin{abstract}
Origin-destination (OD) flow prediction is central to urban analytics, yet deep models trained on raw counts remain vulnerable to distribution shift. The core problem is that raw count supervision cannot distinguish transferable choice mechanisms from environment-specific shortcuts. Raw OD count mixes two objects: how much demand an origin produces and how that demand is allocated across destinations. 
We argue that the transferable object is the exposure-to-choice law that maps spatial conditions to relative destination preferences. 
We propose \textbf{OpFlow}, a mechanism-constrained framework that learns row-centered choice potentials and reconstructs flows by combining the induced allocation with a separately calibrated origin scale. Under distribution shift, spatial exposures and the induced allocations are allowed to vary; what transfers is the conditional map from exposure states to relative choice potentials.
Theoretically, we characterize the identifiable row-centered potential and show that classical spatial interaction laws are restricted log-potential cases.
Controlled synthetic shifts and a real-world experiment show OpFlow improves robustness under environment shifts.
\end{abstract}

\section{Introduction}
\label{sec:intro}
Spatial flows, including commuting, migration, and freight movement, capture directed movements and interactions among spatial units \citep{fotheringham1989spatial, liu2024integrating, barbosa2018human}. Accurate origin-destination (OD) flow prediction is therefore fundamental to transportation planning, urban analytics, and policy evaluation \citep{zheng2014urban, rong2024interdisciplinary}. Recent spatiotemporal predictors and graph neural networks (GNNs) have achieved remarkable in-distribution (ID) accuracy by exploiting relational signals such as distance decay, historical co-occurrence, and static topology \citep{wang2019origin, shi2020predicting, han2022continuous}. However, such signals can play two different roles. When aligned with travel accessibility or opportunity structure, they support modeling; when they merely proxy environment-specific flow patterns, they become shortcuts \citep{geirhos2020shortcut, koh2021wilds}. For instance, geographic distance may encode stable travel impedance in one regime, but degenerate into a shortcut when opportunity distributions, infrastructure, or policies shift \citep{geurs2004accessibility}. The core challenge thus lies in extracting generalizable spatial interaction mechanisms from environment-dependent relational signals, rather than merely fitting historical patterns.

Classical spatial interaction theory provides a vital conceptual anchor. Models such as Gravity, Intervening-Opportunity, and Radiation explain OD flows through destination attraction, travel impedance, cumulative opportunities, and spatial competition \citep{stouffer1940intervening, fotheringham1983new, simini2012universal}. While differing in functional forms, they share a fundamental \emph{allocation view}: an origin first generates a demand scale, which is then distributed across destinations via a spatial choice structure. This decomposition is interpretable and transferable, but classical laws are often too rigid for data-rich urban systems. Conversely, deep models offer greater flexibility but typically optimize raw flows end-to-end \citep{simini2021deep}. We identify this target as the root cause of their out-of-distribution (OOD) fragility: raw count supervision mixes origin scale with destination allocation. Consequently, a model would reduce training loss by fitting context-specific outflow volumes or shortcut correlations without recovering the choice mechanism that should generalize.

To bridge this gap, we return to a micro-to-aggregate view implicit in spatial interaction theory. Each trip begins as a destination choice: a traveler at an origin compares available destinations, attracted by their opportunities, discouraged by travel impedance, and potentially intercepted by nearer alternatives. Aggregating such choices forms OD flows, while the stable object is not flow itself but the behavioral response that maps spatial exposure to relative destination preference. This motivates a target-level reformulation: robust flow prediction should learn how allocation responds to  exposure, rather than fit raw flows directly. We call this opportunity-conditioned choice potential learning: the model maps exposure signatures to choice potentials, i.e., relative destination scores whose softmax induces allocation. Under distribution shift, exposures and allocations may change, but the map is the mechanism intended to transfer.

We instantiate this principle in \textbf{OpFlow}, a mechanism-constrained framework for robust OD flow prediction. OpFlow learns a deployable exposure-to-potential map: it encodes spatial exposure (e.g. opportunities, impedance, intervening exposure, and local competition) into row-centered choice potentials whose softmax defines destination allocation. Origin production is predicted by a separate scale branch with gradient isolation, so OD flows are reconstructed through scale-allocation composition. To improve transfer under drift, it further imposes spatial monotonicity priors and optimizes robust allocation risk, encouraging the learned mechanism to remain stable under drift.

Our contributions are threefold:
i) We recast robust OD flow prediction as opportunity-conditioned choice-potential learning, identifying exposure-to-potential map as the transferable target.
ii) We connect the target to spatial interaction theory by characterizing potential identifiability, classical law special cases, and OOD risk decomposition.
iii) We develop \textbf{OpFlow}, an operator-structured neural model that combines mechanism-guided exposure operators, scale-isolated learning, and robust training for stable prediction.

\section{Related Work}
\label{sec:related}

\textbf{Spatial interaction and OD flow prediction.}
OD flow modeling has long been studied through spatial interaction theory, where aggregate flows are explained by origin production, destination attraction, travel impedance, intervening opportunities, and destination competition \citep{stouffer1940intervening, fotheringham1983new, fotheringham1989spatial, simini2012universal}. Gravity models emphasize OD mass and travel deterrence; intervening-opportunity models describe flows through the opportunities encountered before a destination is reached; radiation models derive mobility probabilities from opportunity thresholds and cumulative surrounding opportunities. These laws differ in calibration, transferability, and structural fidelity \citep{lenormand2016systematic}. Neural spatial interaction models further replace rigid-param functions with neural approximators \cite{fischer2001neural,fischer2002methodology, simini2021deep}.

Recent computational OD modeling has evolved from pairwise prediction to graph-structured and generative modeling. Early deep approaches introduced recurrent, convolutional, and attention-based architectures for short-term OD demand prediction, including urban rail OD forecasting under partial observability and stochastic OD matrix forecasting with graph convolutional recurrent networks \citep{hu2020stochastic,jiang2022deep, rong2024interdisciplinary}. Graph-based methods further represent regions and OD matrices as structured signals, using regional attributes, adjacency, distance, historical flows, and multi-perspective spatial dependencies to improve predictive flexibility \citep{wang2019origin,shi2020predicting,han2022continuous}. More recent work shifts from prediction to generation, including physics-inspired neural models and GAN or diffusion based OD network generators \citep{simini2021deep,rong2023gravity,rong2023complexity,rong2025large}. These methods mainly differ in how they parameterize, aggregate, or generate OD flows. Our focus is complementary: we ask which component of an OD flow should be learned under shift.

\textbf{OOD, causal, and disentangled graph learning.}
Generalization under distribution shift has received increasing attention in urban spatiotemporal prediction. Cross-city transfer methods adapt models trained in data-rich cities to data-scarce target cities by matching similar regions, reweighting source regions, or aligning spatiotemporal representations across domains \citep{wang2019crosscity,jin2022selective,fang2022transfer}. Recent benchmarks further show that spatiotemporal neural networks can degrade substantially under temporal or urban OOD splits \citep{wang2024evaluating}. These studies highlight the importance of transferability, but they usually keep the supervised target fixed and focus on aligning feature distributions, model parameters, graph structures, or source-target domains. We address a target-level issue specific to OD flows: a raw count mixes origin scale variation with destination choice structure. This perspective is related to generic OOD learning, including invariant risk minimization, risk extrapolation, and invariant graph representation learning \citep{arjovsky2019invariant,krueger2021out,li2022learning,wu2022discovering}; however, our invariant object is not a graph representation alone, but the exposure to choice mechanism that induces allocation.

\textbf{Choice, allocation, and opportunity exposure.}
Our formulation is also connected to discrete choice, entropy-based allocation, and accessibility theory. Luce's choice axiom and McFadden's conditional logit relate normalized choice probabilities to latent utilities, while entropy-based spatial interaction derives probabilistic flow allocation under aggregate constraints \citep{luce1959individual, mcfadden1972conditional, fotheringham1989spatial}. Accessibility theory emphasizes that destination availability and spatial opportunity structure shape human mobility and land-use interaction \citep{hansen1959accessibility, yang2025visitfrequency}. We build on these ideas to define the deployable learning target in OpFlow.

\section{Problem Setup and Theory}
\label{sec:theory}

\begin{figure*}[t]
    \centering
    \includegraphics[width=0.7\textwidth]{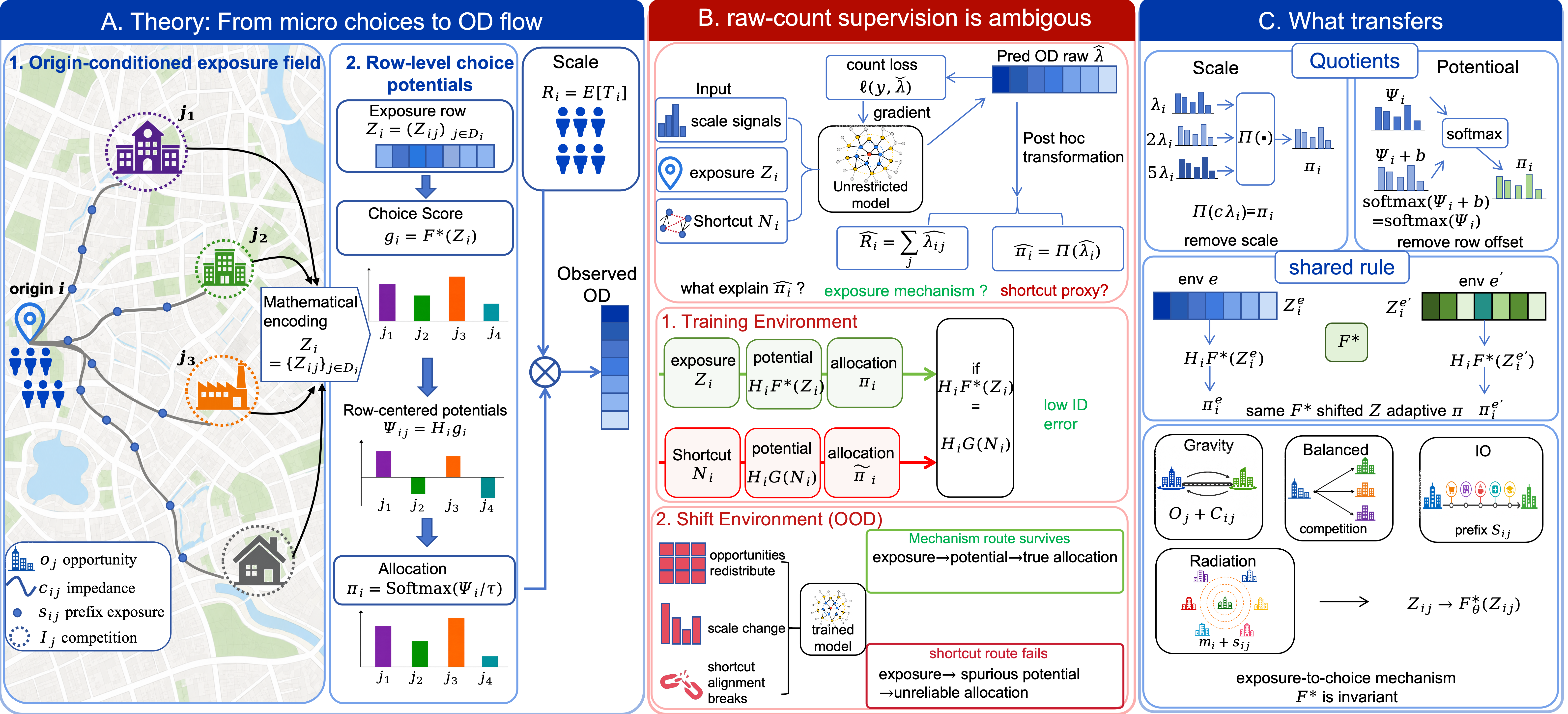}
    \caption{\textbf{From raw count to transferable mechanisms.}
    Raw OD counts make deployable mechanisms and spurious shortcuts indistinguishable ID. OpFlow instead targets the exposure-to-choice mechanism that remains stable under shift.
    }
    \label{fig:problem}
\end{figure*}

\begin{problem}[Robust OD flow prediction]
\label{prob:ood-od}
An environment $e\in\mathcal E$ (e.g., a city or time period) induces covariates $X^e$ and OD matrix $Y^e \in \mathbb{Z}_{\ge 0}^{n \times n}$ with conditional mean $\Lambda^e = \mathbb{E}[Y^e \mid X^e]$. Given training environments $\mathcal{E}_{\rm tr}$, robust prediction seeks $f$ minimizing the worst-case risk over unseen $e \in \mathcal{E}_{\rm te}$:
\begin{equation}
\label{eq:ood-risk}
\mathcal R^e(f)=\mathbb E_{X^e}
\left[
d\big(f(X^e), \Lambda^e\big)
\right],
\quad
\min_{f\in\mathcal F}
\sup_{e\in\mathcal E_{\rm te}}
\mathcal R^e(f).
\end{equation}
where $d$ is a distance metric (e.g., $L_1$ or KL divergence).
\end{problem}

\subsection{From Microscopic Choices to Macroscopic Flows}
\label{sec:micro-to-aggregate}
To understand the macroscopic OD flow, we zoom into the behavior of travelers. Let $T_i^e$ represent the random number of trips generated from origin $i$, with $R_i^e=\mathbb E[T_i^e]$. Each trip chooses a destination \(J_{ir}^e\), and the observed OD count is \(Y_{ij}^e=\sum_{r=1}^{T_i^e}\mathbf 1\{J_{ir}^e=j\}\). If \(J_{ir}^e\sim\mathrm{Categorical}(\pi_i^e)\), then:
\begin{equation}
\label{eq:aggregate-mean}
\underbrace{\lambda_{ij}^e}_{\text{OD row}} = \mathbb{E}[Y_{ij}^e] = \underbrace{R_i^e}_{\text{Production Scale}} \times \underbrace{\pi_{ij}^e}_{\text{Allocation Probability}}.
\end{equation}
An OD row contains two objects: the origin production scale \(R_i^e\), which controls how much flow leaves the origin, and the destination allocation \(\pi_i^e\), which controls where that flow goes. This decomposition allows us to model destination allocation directly as the underlying choice mechanism.

\subsection{The Opportunity-Conditioned Choice Mechanism}
\label{sec:choice-intensity}
We denote the input covariates as \(X^e_{ij} =
(x_i^{e,\mathrm{ori}},\allowbreak
 x_j^{e,\mathrm{dest}},\allowbreak
 x_{ij}^{e,\mathrm{pair}})\), comprising origin, destination and pairwise features. While these features are indexed locally, destination choice fundamentally depends on \emph{relative} exposure within an origin's entire choice set. To address this, we decouple the characterization of the choice environment from the behavioral response by introducing an operator $\mathcal O$:
\[Z_{ij}^{e}=\mathcal O(X^e)_{ij}=\left(
m_i^e,o_j^e,c_{ij}^e,s_{ij}^e,\ell_j^e\right).\]
Specifically, $m_i^e$, $o_j^e$, and $c_{ij}^e$ capture the origin context, destination opportunity, and pairwise impedance, respectively. Crucially, \(m^e_i\) represents observed origin heterogeneity that conditions the response to destination-varying exposure components; $s_{ij}^e$ and $\ell_j^e$ represent the relative choice-set competition and spatial neighborhood effects, computed via dedicated operators: \( s_{ij}^e=\mathcal O_s(o^e,c_i^e)_j,\) \(\ell_j^e=\mathcal O_\ell(o^e,\mathcal G^e)_j.\) Therefore, we model destination choice as an \emph{opportunity-conditioned intensity process}. Let $A_{ij}^\star > 0$ denote the latent intensity with which destination $j$ captures a trip from $i$. Intuitively, an effective opportunity is generated through a sequential filtering process: an origin-side aspiration channel must be active, the trip must survive intervening opportunities, and the destination must provide an acceptable match. Because these filters operate sequentially for a given latent state, the channel-specific capture rate factorizes across them, while travel impedance and local context act as external multiplicative modulators. Since the exact latent state is unobserved, the rates across alternative latent channels superpose via integration. 
The following theorem provides a random-utility foundation for this formulation ($e$ is omitted):

\begin{theorem}[Opportunity-conditioned Choice Intensity]
\label{thm:generalized-intensity}
For a trip generated at origin $i$, suppose each feasible destination $j\in\mathcal D_i$ competes to capture the trip by generating effective opportunities. 
Conditional on the exposure row $Z_i$, the first effective-opportunity arrival time from destination $j$ follows an independent exponential race with rate 
$A_{ij}^{\star}>0$, where
\begin{equation}
\label{eq:generalized-intensity}
A_{ij}^{\star}=
D(c_{ij}) C(\ell_j)
\int_{\Theta}
q_i(\theta) S(s_{ij};\theta) B(o_j;\theta)
\, d\mu(\theta).
\end{equation}
Here, $\theta$ indexes latent micro-opportunity states; 
$D,C,q_i,\allowbreak S,B$ denote travel deterrence, local agglomeration or competition,
origin aspiration, intervening-opportunity survival, and destination acceptance,
respectively. Then the destination allocation probability is
\[
\pi_{ij}^{\star}
=
\frac{A_{ij}^{\star}}{\sum_{k\in\mathcal D_i}A_{ik}^{\star}}
=
\operatorname{softmax}_j(g_i^\star/\tau),
\qquad
g_{ij}^{\star}=\tau\log A_{ij}^{\star}.
\]
Combined with Eq.~\eqref{eq:aggregate-mean}, this formulation directly yields the aggregate OD flow.
\end{theorem}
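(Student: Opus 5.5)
The argument has two parts. The first derives the integral form of the rate in Eq.~\eqref{eq:generalized-intensity} from a superposition of latent channels. The second derives the allocation law from the exponential race.

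\textbf{Constructing the rate.} Fix origin $i$ and condition on the exposure row $Z_i$. For each destination $j$ and latent state $\theta\in\Theta$, I model candidate micro-opportunities as a Poisson point process in time, with a base intensity measure $q_i(\theta)\,d\mu(\theta)$ that encodes origin aspiration.
\begin{itemize}
\item Each candidate is retained independently with probability $S(s_{ij};\theta)$, the survival of intervening opportunities.
\item Each survivor is then retained independently with probability $B(o_j;\theta)$, the destination acceptance.
\end{itemize}
By the Poisson thinning theorem, applied twice, the surviving points in channel $\theta$ form a Poisson process with rate $q_i(\theta)S(s_{ij};\theta)B(o_j;\theta)$. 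The external modulators $D(c_{ij})$ and $C(\ell_j)$ do not depend on $\theta$, so they rescale time (equivalently, the intensity) multiplicatively and can be taken outside the integral.

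By the superposition theorem for Poisson processes, extended to a $\mu$-indexed family through the mapping theorem applied to a Poisson process on $\Theta\times\mathbb R_+$, the effective opportunities from $j$ form a homogeneous Poisson process with rate $A^\star_{ij}$ as in Eq.~\eqref{eq:generalized-intensity}. Its first arrival time $T_j$ is therefore exponentially distributed with rate $A^\star_{ij}$. I take independence across $j$, conditional on $Z_i$, from the hypothesis of the theorem. I also assume enough integrability that $0<A^\star_{ij}<\infty$.

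\textbf{Exponential race.} The trip goes to the destination $J=\arg\min_{j\in\mathcal D_i}T_j$. Ties occur with probability zero because the $T_j$ are continuous and independent. The standard computation is
\[
P(J=j)=\int_0^\infty A^\star_{ij}e^{-A^\star_{ij}t}\prod_{k\neq j}e^{-A^\star_{ik}t}\,dt=\frac{A^\star_{ij}}{\sum_{k}A^\star_{ik}}.
\]
Substituting $A^\star_{ij}=\exp(g^\star_{ij}/\tau)$ gives the softmax form $\operatorname{softmax}_j(g^\star_i/\tau)$. Adding a constant to $g_i^\star$ leaves this unchanged, which is the origin of the row-centering invariance. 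The same law can also be read as a random-utility model: the utility $-\log T_j$ equals $\log A^\star_{ij}$ plus Gumbel noise, which recovers McFadden's logit.

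\textbf{Aggregation.} Trips drawn i.i.d.\ from this categorical distribution, independently of $T_i$, satisfy the assumption behind Eq.~\eqref{eq:aggregate-mean}. Wald's identity then gives $\lambda_{ij}=R_i\pi^\star_{ij}$.

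\textbf{Main obstacle.} The delicate step is the superposition over a continuum of channels. It requires a measurable, locally finite intensity $q_iSB$ and the Fubini--Tonelli theorem to exchange the integral with the void probabilities $P(\text{no point in }[0,t])=\exp(-tA^\star_{ij})$. I would prove it directly by computing this void probability for the marked Poisson process on $\Theta\times\mathbb R_+$, which avoids any appeal to countable superposition. A secondary point is the phrase ``filters operate sequentially'': the two retention steps must be conditionally independent given $\theta$, and this should be stated as an explicit assumption.
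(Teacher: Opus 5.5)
Your proposal is correct and follows essentially the same route as the paper's proof. Both arguments use independent thinning and superposition over $\Theta$ to obtain a Poisson capture process of rate $A^\star_{ij}$, then the exponential-race integral giving $A^\star_{ij}/\sum_k A^\star_{ik}$, then conditioning on the trip count (your Wald identity) to recover $\lambda_{ij}=R_i\pi^\star_{ij}$. Your two refinements tighten the paper's informal Step~1 without changing the argument. First, you realize the continuum superposition as the time-projection of a Poisson process on $\Theta\times\mathbb R_+$ under an explicit finiteness condition. Second, you treat $D,C$ as intensity multipliers rather than as the retention probabilities the paper uses, which fits cases like $C=e^{\gamma\ell_j}>1$ better.
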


Theorem~\ref{thm:generalized-intensity} is OpFlow's micro-to-aggregate interface to model destination choice through log-intensities while preserving the row-normalized allocation structure. OpFlow instantiates a neural generalization of this form: the transferable mechanism is the exposure-to-choice law.
In the model, \(\mathcal O\) is learnable exposure construction, while \(F^\star\) is learned by neural deterrence, opportunity-benefit, intervening-exposure, and local-context operators.

\noindent\textbf{Targeting the Row-Centered Potential.}
\label{sec:row-centered-target}
Eq.~\eqref{eq:aggregate-mean} motivates learning the allocation mechanism; however, the potential that induces an allocation is identifiable only up to a row-wise additive offset.
Furthermore, since the allocation is governed by a Softmax function, the underlying log-potential $g_i^\star$ is inherently shift-invariant (i.e., adding a constant to all elements in a row does not change the probabilities). To ensure a unique and stable learning target, OpFlow does not predict raw counts or uncentered utilities. Instead, we explicitly target the \textbf{row-centered log-potential}:
\begin{equation}
\label{eq:row-centered-target}
H_i = I - \frac{1}{|\mathcal{D}_i^e|} \mathbf{1}\mathbf{1}^\top, \qquad \psi_i^e = H_i g_i^e.
\end{equation}

\subsection{Classical Spatial Laws as Special Cases}
\label{sec:classical-corollaries}

The generalized intensity framework (Eq.~\eqref{eq:generalized-intensity}) unifies classical spatial interaction models. By restricting the functional forms of deterrence ($D$), competition ($C$), survival ($S$), and benefit ($B$), the temperature-scaled log-potentials ($g_{ij}/\tau = \log A_{ij}$) of well-known spatial interaction laws emerge as special cases. We summarize these mappings in Table~\ref{tab:classical-laws} with derivations in Appendix. Crucially, while these classical laws are constrained by rigid functional forms, OpFlow neuralizes the log-potential, capturing complex, non-linear spatial interactions inaccessible to rigid parametric models.

\begin{table}[t]
\centering
\caption{Classical models as restricted cases.}
\label{tab:classical-laws}
\renewcommand{\arraystretch}{1.3}
\small
\begin{tabular}{@{}p{1.6cm} p{3.2cm} p{2.8cm}@{}}
\toprule
\textbf{Model} & \textbf{Restricted Components} & \textbf{Log-Potential } $\boldsymbol{g_{ij}/\tau}$ \\ \midrule
Gravity 
& $D=e^{-\beta c_{ij}}, B=o_j^\alpha$ \newline $S=1, C=1$ 
& $\alpha\log o_j - \beta c_{ij}$ \\[6pt]

Comp. Dest. 
& $D=e^{-\beta c_{ij}}, B=o_j^\alpha$ \newline $C=e^{\gamma\ell_j}$ 
& $\alpha\log o_j - \beta c_{ij} + \gamma\ell_j$ \\[6pt]

Interv. Opp.  
& $S=e^{-\eta s_{ij}}$ \newline $B=1-e^{-\eta o_j}$ 
& $\log(1-e^{-\eta o_j}) - \eta s_{ij}$ \\[6pt]

Radiation
& Threshold-based CDF \newline integration 
& $\log o_j - \log(M_i{+}s_{ij})$ \newline $- \log(M_i{+}s_{ij}{+}o_j)$ \\ 
\bottomrule
\end{tabular}
\end{table}

\subsection{Invariant Choice Mechanisms}
\label{sec:raw-count-ambiguity}

\paragraph{The Pitfall of Raw-Count Supervision.}
The failure of raw-count supervision is fundamentally a target entanglement problem. Crucially, the same covariate can support a true mechanism when used through the structured operator path (e.g., distance as physical impedance $c_{ij}\mapsto D(c_{ij})$), yet become a shortcut when an unrestricted predictor uses it as an environment-specific statistical proxy. Because the variance in $R_i^e$ typically dominates the variance in $\pi_i^e$ by orders of magnitude, the optimization is hijacked by scale fitting. Consequently,  flexible predictors exploit environment-specific scale shortcuts, completely bypassing the true structural mechanism $X_i^e \mapsto Z_i^e \xrightarrow{F^\star} g_i^{e,\star} \xrightarrow{H_i} \psi_i^{e,\star} \mapsto \pi_i^{e,\star}$. This yields deceptively low ID risk but guarantees OOD failure: without decoupling, scale and preference variations remain fatally entangled, rendering the model untransferable.

\paragraph{Invariant Behavioral Response.}
To resolve the shortcut ambiguity, we distinguish between volatile macroscopic statistics and stable microscopic mechanisms. OpFlow does not assume that raw OD flows or allocation probabilities are invariant; such variables inherently undergo covariate shift as spatial contexts change. Instead, our core premise is that the \emph{behavioral response law}: the conditional mapping from exposure states to choice intensities is assumed to remain shared across environments after conditioning on exposure states. Crucially, Assumption~\ref{assump:exposure-stability} ensures that under covariate shift with adequate exposure coverage and bounded residual drift, the allocation probabilities adapt \emph{equivariantly} through $F^\star$, structurally decoupling the stable behavioral mechanism from volatile environmental scales.

\begin{assumption}[Invariant Exposure-to-Choice Mechanism]
\label{assump:exposure-stability}
While the marginal distributions of exposures $Z_i^e$, origin scales $R_i^e$ shift arbitrarily across $e$, the true row-centered log-potential admits the decomposition:
\begin{equation}
\label{eq:mechanism-decomp}
\psi_i^{e,\star} = H_iF^\star(Z_i^e) + \Delta_i^e,
\end{equation}
where $F^\star$ is the shared mechanism, and $\Delta_i^e$ represents bounded, environment-specific structural drift unexplained by $Z_i^e$. Since both sides are row-centered, $\Delta_i^e$ is also taken to be row-centered.
\end{assumption}

\subsection{OOD Generalization Bounds}
\label{sec:ood-consequences}

Let $P_{\rm tr}$ and $P_{\rm te}$ denote the training and deployment distributions over exposure rows $Z_i$, and assume a bounded density ratio $dP_{\rm te}/dP_{\rm tr}\le \rho$. For clarity, we suppress the environment index when no ambiguity arises. Under Assumption~\ref{assump:exposure-stability}, define the allocation induced by the shared mechanism and the true allocation with residual drift as
\(
\pi_{0,i} = \operatorname{softmax}\left(H_iF^\star(Z_i)/\tau\right)\),\(\pi_i^\star = \operatorname{softmax}\left((H_iF^\star(Z_i)+\Delta_i)/\tau\right).
\)
The learned allocation is
\(
\widehat{\pi}_i^\theta = \operatorname{softmax}\left(H_iF_\theta(Z_i)/\tau\right),
\)
where $F_\theta(Z_i)$ is the model-predicted uncentered score row. We define the training allocation risk against the shared mechanism as
\(
R_{\rm tr}^{\rm alloc}(\theta) = \mathbb{E}_{\rm tr} \left[ \operatorname{KL}\left( \pi_{0,i} \| \widehat{\pi}_i^\theta \right) \right].
\)

\begin{proposition}[Row-level OOD allocation bound]
\label{prop:ood-drift}
Under Assumption~\ref{assump:exposure-stability} and $dP_{\rm te}/dP_{\rm tr}\le \rho$, the expected deployment allocation error satisfies
\[
\mathbb{E}_{\rm te} \left[ \left\| \pi_i^\star - \widehat{\pi}_i^\theta \right\|_1 \right] \le \underbrace{ \sqrt{2\rho R_{\rm tr}^{\rm alloc}(\theta)} }_{\text{transferable error}} + \underbrace{ \mathbb{E}_{\rm te} \left[ \frac{\sqrt{|\mathcal{D}_i|}}{\tau} \left\|\Delta_i\right\|_2 \right] }_{\text{residual drift}}.
\]
\end{proposition}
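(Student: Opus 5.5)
The plan is to split the deployment error with the triangle inequality through the intermediate allocation $\pi_{0,i}$ induced by the shared mechanism:
\[
\|\pi_i^\star-\widehat\pi_i^\theta\|_1\le \|\pi_{0,i}-\widehat\pi_i^\theta\|_1+\|\pi_i^\star-\pi_{0,i}\|_1 .
\]
Taking $\mathbb E_{\rm te}$ of both sides, the first term should become the transferable error and the second the residual drift. The two terms are handled separately.

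\textbf{Transferable term.} I would apply Pinsker's inequality row by row:
\[
\|\pi_{0,i}-\widehat\pi_i^\theta\|_1\le\sqrt{2\,\operatorname{KL}(\pi_{0,i}\|\widehat\pi_i^\theta)} .
\]
Jensen's inequality (concavity of the square root) then moves the expectation inside:
\[
\mathbb E_{\rm te}\|\cdot\|_1\le\sqrt{2\,\mathbb E_{\rm te}[\operatorname{KL}]} .
\]
The quantity $\operatorname{KL}(\pi_{0,i}\|\widehat\pi_i^\theta)$ is a nonnegative function of the exposure row $Z_i$ alone, since both allocations are deterministic functions of $Z_i$. So the density-ratio bound gives $\mathbb E_{\rm te}[\operatorname{KL}]\le\rho\,\mathbb E_{\rm tr}[\operatorname{KL}]=\rho R^{\rm alloc}_{\rm tr}(\theta)$. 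Together these yield $\sqrt{2\rho R^{\rm alloc}_{\rm tr}(\theta)}$. This step needs $\rho$ to act on a function of $Z_i$ only. That holds because $R^{\rm alloc}_{\rm tr}$ is defined against $\pi_{0,i}$ rather than $\pi_i^\star$, and the latter would also depend on $\Delta_i$.

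\textbf{Residual drift term.} This step needs a Lipschitz bound for softmax. Write $u=H_iF^\star(Z_i)/\tau$ and $v=u+\Delta_i/\tau$, and let $p(t)=\operatorname{softmax}(u+t\Delta_i/\tau)$ for $t\in[0,1]$. The Jacobian of softmax is $\operatorname{diag}(p)-pp^\top$, so
\[
\|p(1)-p(0)\|_1\le\int_0^1\big\|(\operatorname{diag}(p)-pp^\top)\Delta_i/\tau\big\|_1\,dt .
\]
I would bound the integrand crudely by the $\ell_\infty\to\ell_1$ type estimate
\[
\|(\operatorname{diag}(p)-pp^\top)w\|_1=\sum_j p_j\,|w_j-\langle p,w\rangle|\le 2\|w\|_\infty .
\]
Here $\|w\|_\infty\le\|w\|_2$, with $w=\Delta_i/\tau$. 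This already gives a bound of $2\|\Delta_i\|_2/\tau$, which is at most $\sqrt{|\mathcal D_i|}\,\|\Delta_i\|_2/\tau$ whenever $|\mathcal D_i|\ge 4$.

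\textbf{Main obstacle.} The obstacle is the constant, so that the bound holds for every choice-set size without a case split. Cleaner alternatives exist. One is $\sum_j p_j|w_j-\langle p,w\rangle|\le\sqrt{\operatorname{Var}_p(w)}\le\|w\|_2$, using Cauchy–Schwarz under $p$ together with $\operatorname{Var}_p(w)\le\sum_j p_jw_j^2\le\|w\|_\infty^2\le\|w\|_2^2$. That gives $\|\pi_i^\star-\pi_{0,i}\|_1\le\|\Delta_i\|_2/\tau\le\sqrt{|\mathcal D_i|}\,\|\Delta_i\|_2/\tau$, since $|\mathcal D_i|\ge1$. This matches the stated form; the factor $\sqrt{|\mathcal D_i|}$ is what one gets from the $\ell_1$-to-$\ell_2$ conversion $\|\Delta_i\|_1\le\sqrt{|\mathcal D_i|}\|\Delta_i\|_2$ applied after a bound $\|p(1)-p(0)\|_1\le\|\Delta_i\|_1/\tau$.

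Row-centering of $\Delta_i$ plays no essential role here, since softmax is shift-invariant anyway. Taking $\mathbb E_{\rm te}$ of the drift term and adding it to the transferable term completes the bound.
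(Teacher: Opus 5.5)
Your proof is correct. Your treatment of the transferable term matches the paper's: the triangle inequality through $\pi_{0,i}$, then Pinsker, then Jensen, then the density ratio applied to a nonnegative function of $Z_i$. Your remark that this change of measure works only because $R^{\rm alloc}_{\rm tr}$ is measured against $\pi_{0,i}$ rather than $\pi_i^\star$ makes explicit something the paper uses silently.

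The genuine difference is in the drift term. The paper converts norms on the output side, $\|\pi_i^\star-\pi_{0,i}\|_1\le\sqrt{|\mathcal D_i|}\,\|\pi_i^\star-\pi_{0,i}\|_2$. It then quotes the fact that softmax is $1$-Lipschitz in $\ell_2$. So the $\sqrt{|\mathcal D_i|}$ is exactly the cost of that norm conversion, not of a bound $\|\Delta_i\|_1/\tau$ as you guessed.

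You instead integrate the Jacobian $\operatorname{diag}(p)-pp^\top$ along the segment and bound it directly in $\ell_1$ via $\sum_j p_j|w_j-\langle p,w\rangle|\le\sqrt{\operatorname{Var}_p(w)}$. This gives the dimension-free estimate $\|\pi_i^\star-\pi_{0,i}\|_1\le\|\Delta_i\|_\infty/\tau$. By Popoviciu's inequality it even gives $\frac{1}{2\tau}(\max_j\Delta_{ij}-\min_j\Delta_{ij})$.

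What each route buys: the paper's is shorter because it cites a standard fact. Yours is self-contained and strictly sharper. It shows that the $\sqrt{|\mathcal D_i|}$ factor in the stated residual-drift term is an artifact of the proof rather than intrinsic; the stated inequality follows from yours only after deliberately weakening via $\|\Delta_i\|_\infty\le\|\Delta_i\|_2\le\sqrt{|\mathcal D_i|}\,\|\Delta_i\|_2$. You can drop the crude $2\|w\|_\infty$ detour and the case split on $|\mathcal D_i|\ge 4$ entirely.
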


This bound separates two sources of OOD error. The first term is controlled by training allocation risk and exposure coverage, while the second term measures environment-specific drift not explained by the exposure signature. Thus allocations need not be invariant across environments; they change with $Z_i$, but the response law through $F^\star$ is the transferable object.

Finally, using the mean decomposition in Eq.~\eqref{eq:aggregate-mean}, the row-level count error decomposes into scale error and allocation error. If $\widehat{\lambda}_i=\widehat{R}_i\widehat{\pi}_i^\theta$, $\lambda_i^\star=R_i^\star\pi_i^\star$, and $R_i^\star\le R_{\max}$, then
\begin{equation}
\label{eq:row-count-bound}
\begin{aligned}
\mathbb{E}_{\rm te}
\left[\left\|\widehat{\lambda}_i-\lambda_i^\star\right\|_1\right]
&\le \mathcal{R}_{\rm te}^{\rm scale} \\
&\quad + R_{\max}\,\mathbb{E}_{\rm te}
\left[\left\|\widehat{\pi}_i^\theta-\pi_i^\star\right\|_1\right].
\end{aligned}
\end{equation}
where
\(
\mathcal{R}_{\rm te}^{\rm scale} = \mathbb{E}_{\rm te} \left[ \left| \widehat{R}_i - R_i^\star \right| \right].
\)
Equation~\eqref{eq:row-count-bound} is used only to connect allocation transfer to OD-flow reconstruction; the scale branch is specified in the Scale-Isolated Reconstruction subsection below.

\section{Methodology}
\label{sec:method}

\begin{figure}[t]
    \centering
    \includegraphics[width=\linewidth]{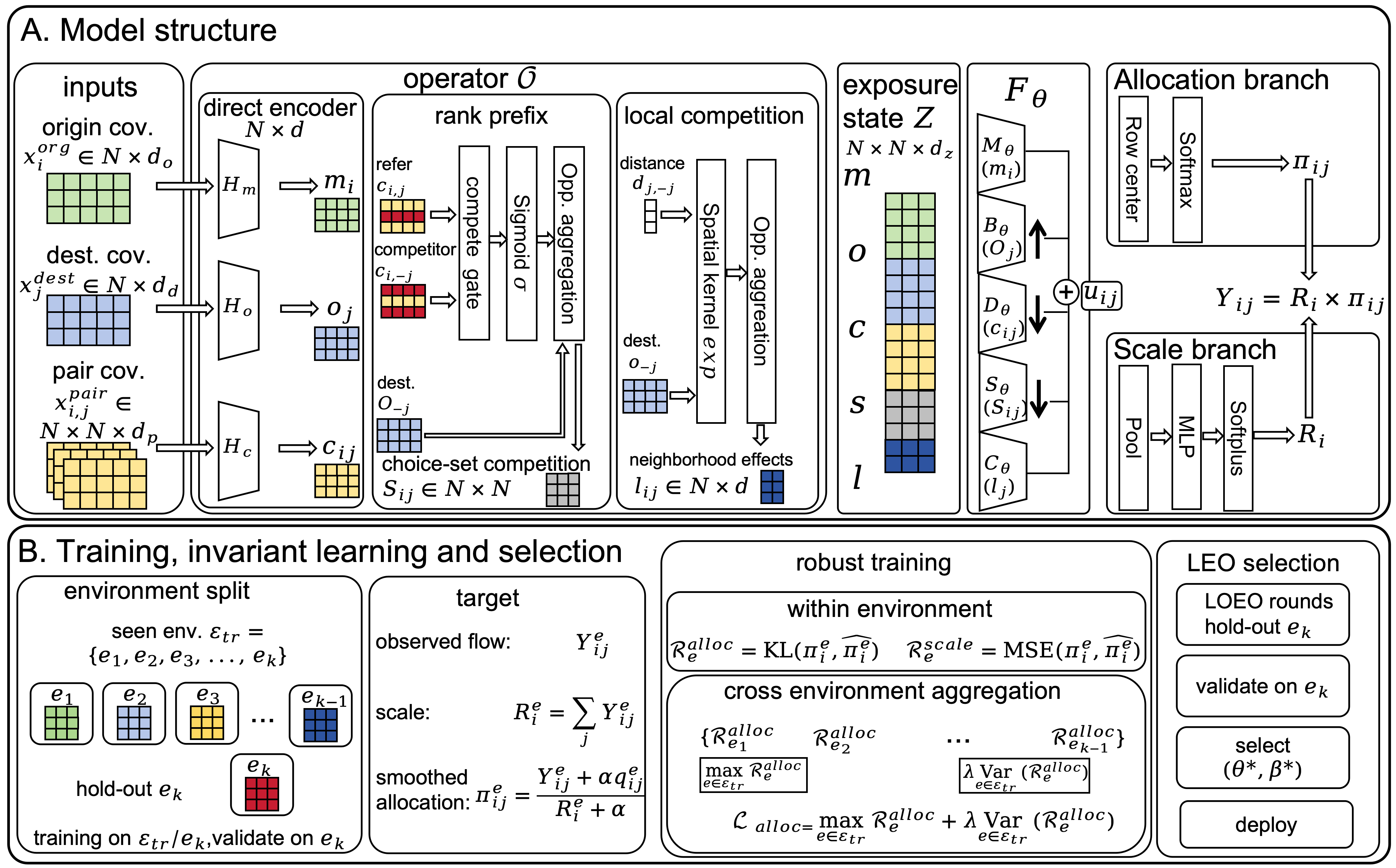}
    \caption{
    Overview of the proposed method.
    }
    \label{fig:opflow_model}
\end{figure}

\subsection{Overview}
As shown in Fig.~\ref{fig:opflow_model}, OpFlow implements the theoretical framework by structurally decoupling the macroscopic OD flow into origin scale and destination allocation (Eq.~\eqref{eq:aggregate-mean}). Instead of using an unrestricted black-box network that is prone to shortcut learning, it parameterizes the invariant exposure-to-choice mechanism $F^\star$ (Assumption~\ref{assump:exposure-stability}) via structured neural spatial-interaction operators. These operators strictly follow the opportunity-conditioned intensity law (Theorem~\ref{thm:generalized-intensity}), ensuring the learned potential remains grounded in transferable behavioral mechanisms.

\subsection{Operator-Structured Choice Potential}
Given the input $X^e$, OpFlow first applies an exposure construction operator $\mathcal O$ to build exposure state $Z_{ij}^e$. $\mathcal O$ contains both direct neural encodings and differentiable spatial exposure layers. The direct encoders map origin, destination, and pairwise covariates into exposure representations:
\(m_i^e=H_m(X_i^{e,\mathrm{ori}})\), \(o_j^e=H_o(X_j^{e,\mathrm{dest}})\), \(c_{ij}^e=H_c(X_{ij}^{e,\mathrm{pair}})\), where $H_m$, $H_o$, and $H_c$ are MLPs. The rank-prefix layer computes an intervening-opportunity field: \(s_{ij}^e=\sum_{k\in\mathcal D_i^e,\;k\neq j}o_k^e\,\sigma\left(\frac{c_{ij}^e-c_{ik}^e}{T}\right)\), where $T$ is a soft ranking temperature, \(\sigma\) is the sigmoid function. The local opportunity-field layer constructs destination competition or agglomeration: \(\ell_j^e=\sum_{k\neq j}w_{jk}^e o_k^e\), \(w_{jk}^e=\exp\left(-\frac{d_{jk}^e}{\omega}\right).\) Here $d_{jk}^e$ is the distance, and $w_{jk}^e$ is the exponential kernel. We then compute bounded origin gates \((a_i,b_i,r_i,\gamma_i)=\mathbf 1+\rho\tanh(G_\theta(m_i^e)),\) which modulate the destination-varying exposure responses.

Then we approximate the temperature-scaled choice score $u_{ij}^\star = \tau\log A_{ij}^\star$ using an operator-structured neural network $F_\theta$. $F_\theta$ decomposes the uncentered choice score $u_{ij}^e$ into additive neural operators corresponding to the components of Theorem~\ref{thm:generalized-intensity}:
\begin{equation}
\label{eq:neural-operators}
\small
u_{ij}^e = a_i B_\theta(o_j^e) - b_i D_\theta(c_{ij}^e) - r_i S_\theta(s_{ij}^e) + \gamma_i C_\theta(\ell_j^e) + \eta I_\theta(m_i^e,z_{ij}^e),
\end{equation}
where $D_\theta, B_\theta, S_\theta, C_\theta$ are neural approximators for travel deterrence, destination benefit, intervening-opportunity survival, and local competition, respectively, and \(I_\theta(m_i^e,z_{ij}^e)\) is an interaction term for remaining origin-destination effects.

\noindent\textbf{Physical Monotonicity Prior.} 
We regularize the scalar exposure operators with a soft monotonicity penalty, encouraging deterrence and intervening exposure to decrease utility and opportunity to increase it:
\begin{equation*}
\begin{split}
\Omega_{\rm op} = \mathbb{E} \Bigg[ & \mathrm{ReLU}\left(-\frac{\partial D_\theta}{\partial c}\right)^2 + \mathrm{ReLU}\left(-\frac{\partial S_\theta}{\partial s}\right)^2 \\
& + \mathrm{ReLU}\left(-\frac{\partial B_\theta}{\partial o}\right)^2 \Bigg].
\end{split}
\end{equation*}

\noindent\textbf{Row-Centered Allocation.} 
To eliminate the origin-specific nuisance scale and satisfy the identifiability condition (Eq.~\eqref{eq:row-centered-target}), we apply row-centering to the uncentered scores:
\(\hat{\psi}_{\theta,i}^e = H_i u_{\theta, i}^e.\)
The destination allocation is then obtained via a temperature-scaled softmax.

\subsection{Scale-Isolated Reconstruction}
To predict the volatile origin production scale $R_i^e$, we employ a separate scale branch. The scale is predicted using origin-level attributes $m_i^e$ and a stop-gradient aggregated summary of the choice potentials:
\begin{equation}
\label{eq:scale-branch}
\hat{R}_i^e = \mathrm{softplus}\left( S_\beta\big(m_i^e, \mathrm{sg}(\mathrm{Pool}_{j \in \mathcal{D}_i^e}(u_{ij}^e))\big) \right),
\end{equation}
where $\mathrm{sg}(\cdot)$ blocks gradients from the scale loss to the allocation potential, enforcing structural decoupling. The final expected OD flow is reconstructed as their product Eq.~\eqref{eq:aggregate-mean}.

\subsection{Environment-Robust Training}
To optimize for the worst-case OOD risk (Problem~\ref{prob:ood-od}) and bound the transferable error (Proposition~\ref{prop:ood-drift}), we employ an environment-robust training objective. First, the empirical allocation target is smoothed to ensure full support: $\tilde{\pi}_{ij}^e = (Y_{ij}^e + \alpha q_{ij}^e) / (\sum_k Y_{ik}^e + \alpha)$. The environment-wise allocation risk is defined as $\mathcal{R}_e^{\rm alloc} = \frac{1}{|\mathcal{I}_e|} \sum_{i \in \mathcal{I}_e} \mathrm{KL}(\tilde{\pi}_i^e \parallel \hat{\pi}_i^e)$.

\noindent\textbf{Allocation Loss.}
We minimize the worst-case risk across seen environments, augmented with a variance penalty to discourage environment-specific shortcut learning:
\begin{equation}
\label{eq:alloc-loss}
\mathcal{L}_{\rm alloc} = \max_{e \in \mathcal{E}_{\rm tr}} \mathcal{R}_e^{\rm alloc} + \lambda_{\rm vrex} \mathrm{Var}_{e \in \mathcal{E}_{\rm tr}}(\mathcal{R}_e^{\rm alloc}).
\end{equation}

\noindent\textbf{Scale and Calibration Losses.} 
The scale branch is optimized via a worst-case log-space MSE to handle long-tail distributions: $\mathcal{L}_{\rm scale} = \max_{e} \frac{1}{|\mathcal{I}_e|} \sum_i [\log(1+\hat{R}_i^e) - \log(1+R_i^e)]^2$. To physically calibrate the count reconstruction without corrupting the allocation simplex, we apply a detached count loss: $\mathcal{L}_{\rm count} = \mathbb{E}_{e, i} [ d_{\rm count}(\hat{R}_i^e \mathrm{sg}(\hat{\pi}_i^e), Y_i^e) ]$.

\noindent\textbf{Total Objective and model selection.} 
The complete training objective integrates the mechanism constraints and robust optimization:
\begin{equation}
\label{eq:total-loss}
\mathcal{L} = \mathcal{L}_{\rm alloc} + \lambda_{\rm scale}\mathcal{L}_{\rm scale} + \lambda_{\rm count}\mathcal{L}_{\rm count} + \lambda_{\rm op}\Omega_{\rm op}.
\end{equation}
For deployment, we select models by cross-environment validation. The training environments are partitioned into \(K\) folds \(\{\mathcal E_k\}_{k=1}^K\). Each candidate configuration is trained on \(\mathcal E_{\rm tr}\setminus \mathcal E_k\) and evaluated on \(\mathcal E_k\) using the held-out stable allocation risk. The configuration with the lowest aggregated held-out risk is retrained on all training environments, and the resulting \(F_\theta,S_\beta\) are deployed. LEO selection is the special case \(K=|\mathcal E_{\rm tr}|\).

\section{Experiments}
\subsection{Synthetic Experiments}

\paragraph{DGP and interventions.}
We sample \(N=80\) spatial units on the 2D square \([0,1]^2\) to form an OD system, where each $i$ has a candidate set $\mathcal D_i$ of its nearest $K_i \sim \mathcal{N}(25,5)$ destinations. Per seed, 14 environments (8/2/4 for train/val/test) share spatial structures but have different opportunity fields $o_j^e$ (from a Gaussian-mixture field), scales, allocations and counts. The oracle log potential is designed according to Eq.~\eqref{eq:generalized-intensity}: \(g_{ij}^{e,*}=  a_i O_j^e - b_i C_{ij} - \beta_s S_{ij}^e+\beta_\ell L_j^e+\Delta_{ij}^e\), with standardized components,  Gaussian drift $\Delta_i^e$, and context-aware coefficients $a_i, b_i$. Counts are sampled as \(Y_{ij}^e\sim\mathrm{Poisson}(R_i^e\pi_{ij}^{e,*})\). Thus, $Y_{ij}^e$ confounds scale and allocation, requiring OpFlow to reconstruct exposures from noisy proxies.
We then ask four questions: 
\textbf{Q1} Does invariant mechanism learning remove origin scale nuisance that misleads raw-count supervision? what are its limitations?
\textbf{Q2} Can the exposure-to-choice law generate new allocations when opportunities are redistributed?
\textbf{Q3} Can structured exposure modeling prevent models from using accessibility as a shortcut for opportunity when their training correlation breaks?
\textbf{Q4} Which components are load-bearing under OOD stress?

\noindent
\begin{minipage}[t]{\columnwidth} 
    \centering
    \includegraphics[width=\textwidth]{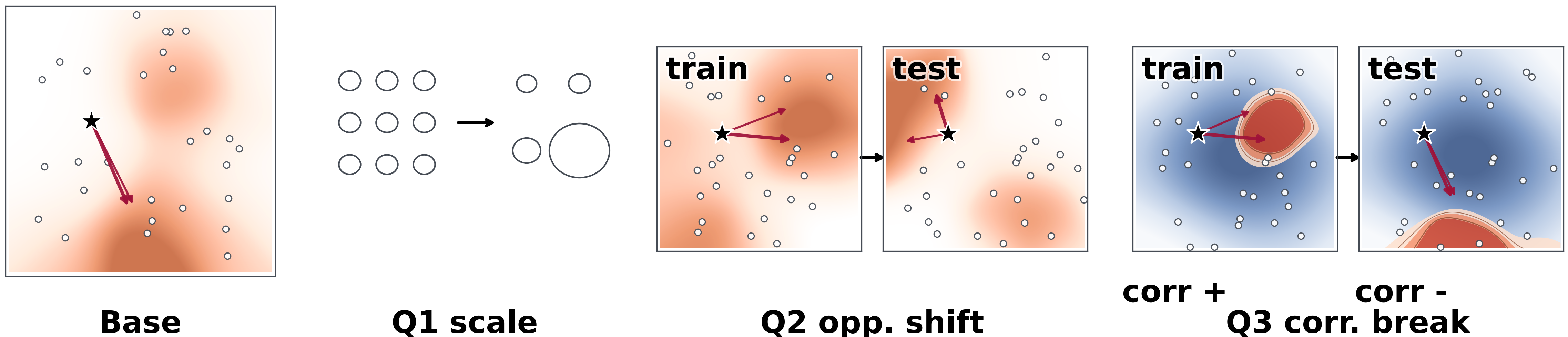}
    \captionof{figure}{Synthetic interventions used to evaluate target-level robustness, opportunity redistribution, exposure-correlation shift, and component ablations.}
    \label{fig:synthetic_intervention}
\end{minipage}

\paragraph{Metrics and baselines.}
We evaluate from allocation, mechanism, and flow count recovery. For allocation, we use KL divergence and CPC (Common Part of Commuters \citep{lenormand2012universal}) calculated by rows. For mechanism recovery, we report the MSE of the potential (PMSE). For counts, we report RMSE and MAE. All metrics average 30 seeds with 95\% CIs.
Baselines include: (1) MLP probes for specific supervisions/assumptions (\textsc{RawCountMLP}, \textsc{PairAllocMLP}, \textsc{ScaleAwareMLP}); (2) Domain generalization methods (\textsc{GroupDRO}, \textsc{VREx}, \textsc{IRM}, \textsc{DANN}) on the \textsc{PairAllocMLP} backbone, reporting \textsc{BestDG} selected by val KL; (3) Classical spatial interaction models, reporting \textsc{BestClassical}. Full results are in the Appendix.

\paragraph{Results.} Fig.~\ref{fig:synthetic_intervention} illustrates the interventions for Q1--Q4. Q1 shows a target-level failure, we increase train/validation origin scale severity while keeping all other factors fixed. This intervention tests whether raw count supervision is hijacked by the high-variance scale and therefore learns a worse allocation rule. Fig.~\ref{fig:q1_res} shows that \textsc{RawCountMLP} deteriorates as scale heterogeneity increases, whereas \textsc{PairAllocMLP} and OpFlow remain nearly flat. This shows the separation of Eq.~\eqref{eq:aggregate-mean} is necessary toward robust prediction. 
Q2 tests opportunity redistribution by fitting models at the baseline severity and evaluating them under shifted destination-opportunity fields. Since the oracle allocation changes with the opportunity landscape, the goal is not to remain perfectly flat but to preserve a low allocation error. OpFlow achieves the lowest Row-KL across severities; at the strongest shift, it improves over \textsc{BestDG-AllocMLP} by 23.7\%. \textsc{ShareStable} and \textsc{RawCountMLP} degrade sharply, indicating that fixed shares and raw-count fitting fail to adapt to redistributed opportunities. The small gain of \textsc{BestDG-AllocMLP} over its DG components further suggests that generic DG regularization cannot replace the opportunity-conditioned structure used by OpFlow.
Q3 evaluates an exposure-correlation shift. Models are trained and validated under positive opportunity-accessibility correlation, and then evaluated when this correlation is reversed. This tests whether a model can separate opportunity attraction from travel impedance rather than relying on their correlation. As shown in Fig.~\ref{fig:q3_res}, generic allocation models deteriorate under the reversed correlation: \textsc{BestDG-AllocMLP} increases from 0.265 to 0.331 in Row-KL, and closely tracks the ordinary allocation MLP family. In contrast, OpFlow achieves the lowest error at every severity and reaches 0.156 Row-KL at the strongest reversal, reducing the error of \textsc{BestDG-AllocMLP} by 52.9\% and outperforming it. \textsc{RawCountMLP} and \textsc{ShareStable} degrade more substantially, confirming that count-level and fixed-share shortcuts are brittle under correlation shifts. These results indicate that OpFlow’s structured exposure channels help separate admissible components whose marginal association changes across environments.
Q4 ablates the main OpFlow components at the Q2 and Q3 stress endpoints. The clearest effect comes from the rank-prefix exposure: removing it substantially increases Row-KL under opportunity redistribution and exposure-correlation reversal. The local opportunity field has a smaller but clear role under the Q3 correlation stress, where its removal increases Row-KL. These ablations suggest that OpFlow’s gains mainly come from explicitly modeling rank-prefix competition and, under correlation shift, local destination context.

\begin{figure}[htbp]
    \centering
    \begin{subfigure}[b]{0.48\linewidth}
        \centering
        \includegraphics[width=\linewidth]{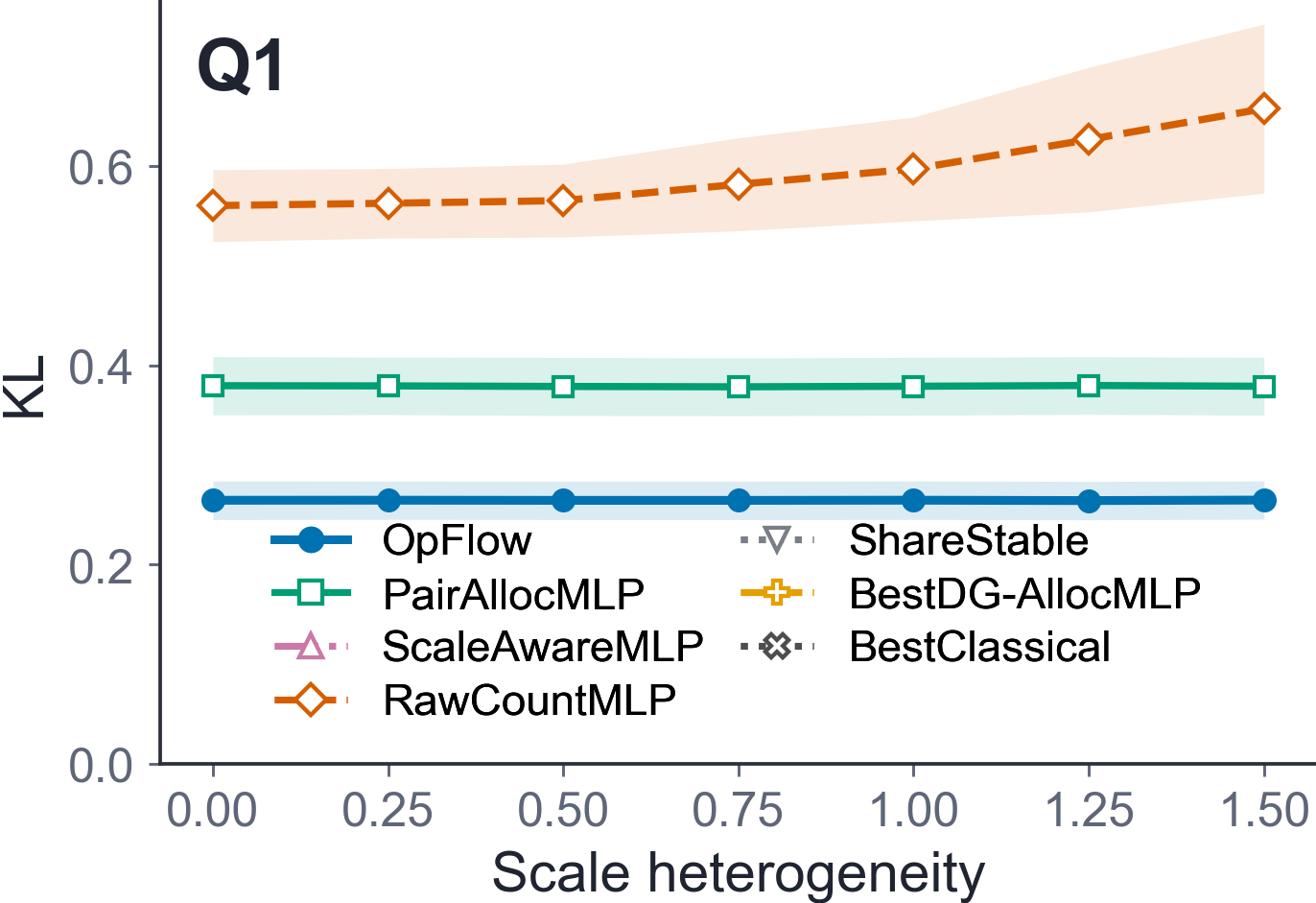}
        \caption{Q1 result}
        \label{fig:q1_res}
    \end{subfigure}
    \hfill 
    \begin{subfigure}[b]{0.48\linewidth}
        \centering
        \includegraphics[width=\linewidth]{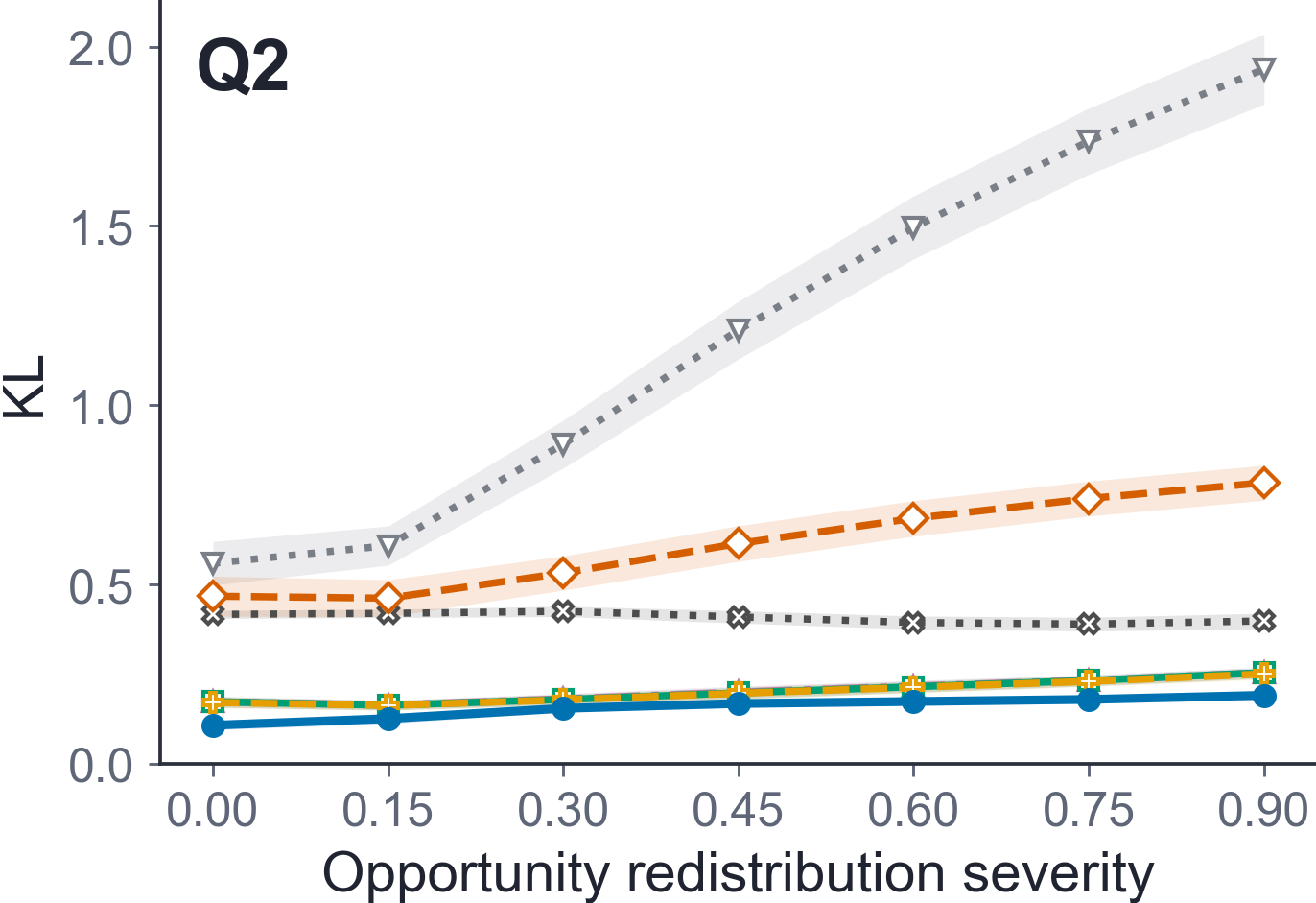}
        \caption{Q2 result}
        \label{fig:q2_res}
    \end{subfigure}
    \vspace{0em}
    \begin{subfigure}[b]{0.48\linewidth}
        \centering
        \includegraphics[width=\linewidth]{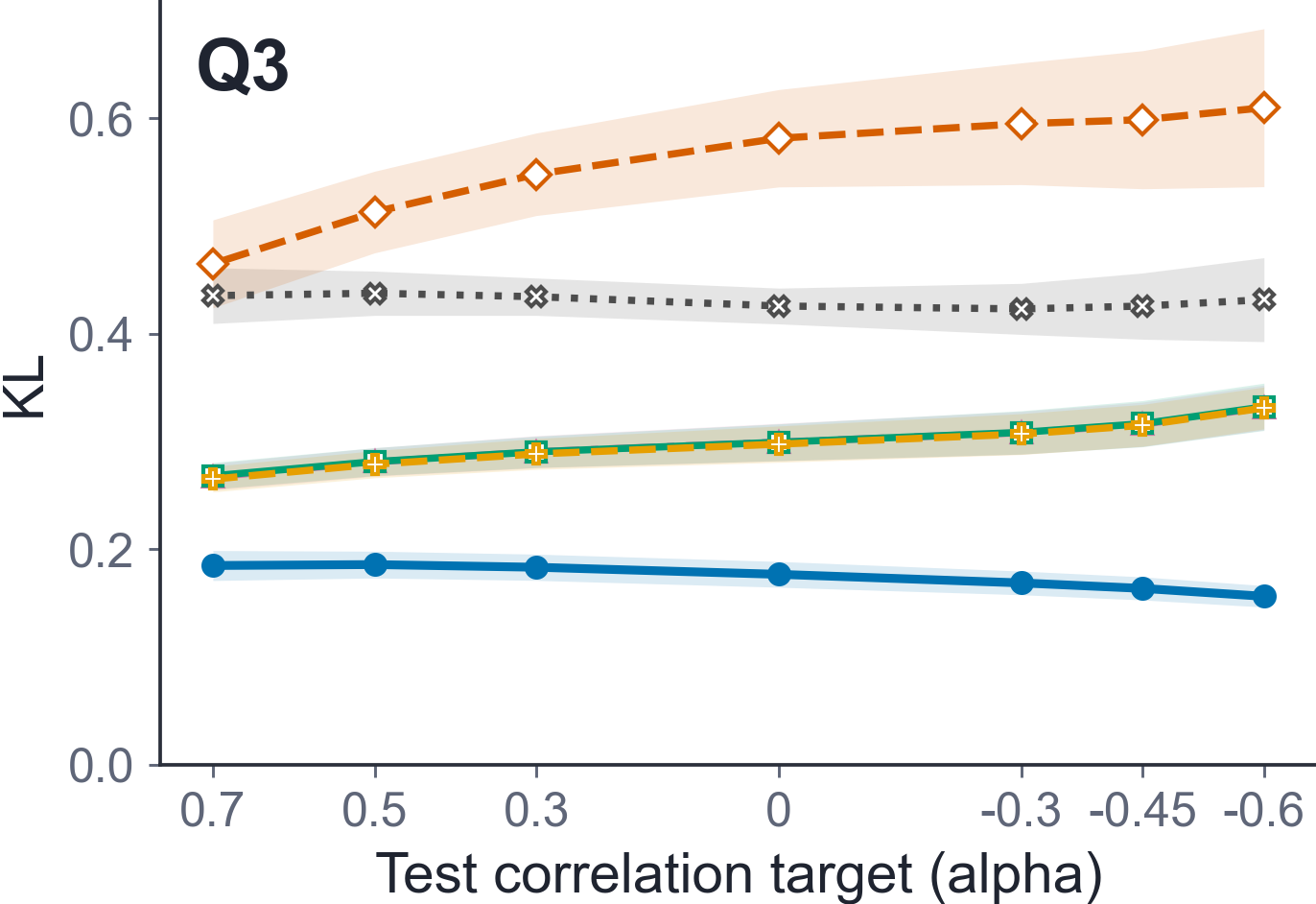}
        \caption{Q3 result}
        \label{fig:q3_res}
    \end{subfigure}
    \hfill
    \begin{subfigure}[b]{0.48\linewidth}
        \centering
        \includegraphics[width=\linewidth]{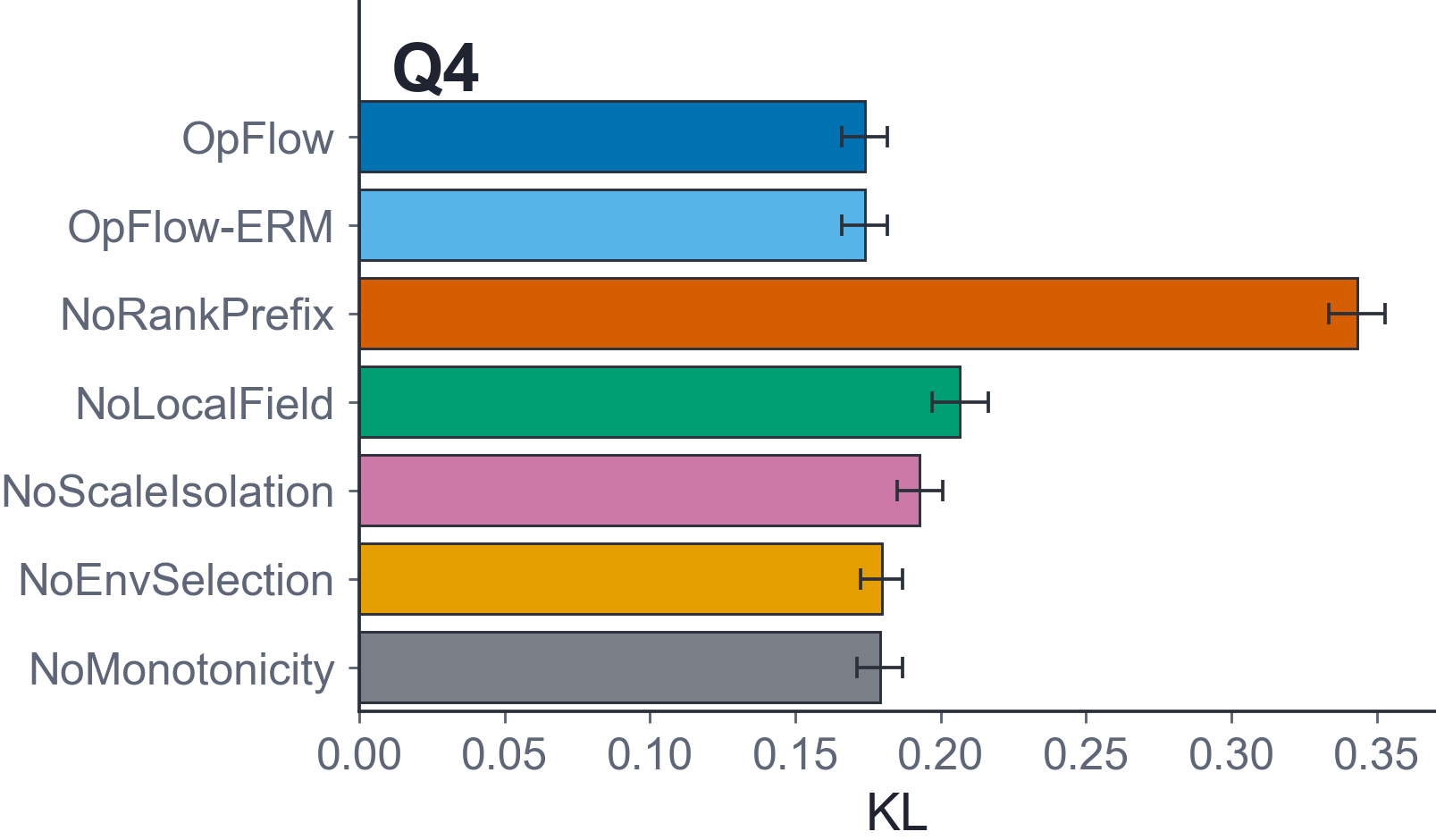}
        \caption{Q4 result}
        \label{fig:q4_res}
    \end{subfigure}
        
    \caption{Synthetic OOD results across four interventions. (a) Origin-scale heterogeneity mainly harms raw-count supervision. (b) Opportunity redistribution tests whether models adapt allocation to new destination-opportunity fields. (c) Exposure-correlation reversal tests whether opportunity attraction is separated from travel impedance. (d) Ablations identify load-bearing exposure channels.}
    \label{fig:feature_grid}
\end{figure}

\subsection{Real-world Experiments}
\label{sec:realworld}

\begin{table*}[t]
\centering
\scriptsize
\setlength{\tabcolsep}{3.5pt}
\renewcommand{\arraystretch}{1.15}
\caption{
\textbf{Real-world cross-county generalization.}
We report Row-KL, Row-CPC, and Log-RMSE on each split.
Values are formatted as $\text{KL} / \text{CPC} / \text{Log-RMSE}$.
Lower is better for KL and Log-RMSE; higher is better for CPC.
}
\label{tab:realworld-split}

\begin{tabular}[t]{@{}l ccc ccccc @{}}
\toprule
\multicolumn{4}{c}{\textit{Classical}} & \multicolumn{5}{c}{\textit{Generic ML}} \\
\cmidrule(lr){2-4} \cmidrule(lr){5-9}
Split & Gravity & Int. Opp. & Rad. & RF & GBRT & GM & Pair & Scale \\
\midrule
RandSP & 0.545/0.613/1.468 & 8.590/0.260/3.132 & 0.999/0.538/1.582 & 1.364/0.420/2.129 & 0.810/0.521/1.655 & 2.842/0.426/2.414 & 0.399/0.684/1.222 & 0.438/0.676/1.193 \\
GeoSP  & 0.575/0.604/1.419 & 8.729/0.252/3.060 & 1.088/0.524/1.540 & 1.235/0.440/2.028 & 0.775/0.532/1.634 & 2.850/0.414/2.486 & 0.392/0.684/1.201 & 0.407/0.672/1.180 \\
OppSP  & 0.692/0.571/1.336 & 8.747/0.248/2.826 & 1.487/0.480/1.534 & 0.961/0.495/1.743 & 0.781/0.537/1.663 & 2.929/0.395/2.612 & 0.468/0.682/1.257 & 0.481/0.693/1.222 \\
Avg.   & 0.604/0.596/1.408 & 8.689/0.253/3.006 & 1.191/0.514/1.552 & 1.186/0.452/1.967 & 0.789/0.530/1.651 & 2.874/0.412/2.504 & 0.413/0.684/1.220 & 0.441/0.685/1.201 \\
\bottomrule
\end{tabular}
\hfill
\begin{tabular}[t]{@{}l ccccc c @{}}
\toprule
\multicolumn{5}{c}{\textit{Deep/Graph}} & \multicolumn{1}{c}{\textit{Ours}} \\
\cmidrule(lr){2-6} \cmidrule(lr){7-7}
Split & DGM & NetGAN & DiffODGen& WeDAN & GMEL & \textsc{OpFlow} \\
\midrule
RandSP & 0.656/0.545/4.337 & 0.595/0.615/1.470 & 0.676/0.611/1.623 & 0.645/0.609/1.629 & 0.435/0.674/1.299 & \textbf{0.322/0.710/1.088} \\
GeoSP  & 0.712/0.530/4.447 & 0.749/0.572/1.652 & 0.761/0.579/1.705 & 0.711/0.582/1.689 & 0.514/0.655/1.322 & \textbf{0.370/0.686/1.14} \\
OppSP  & 0.792/0.510/4.631 & 0.800/0.578/1.498 & 0.711/0.592/1.416 & 0.740/0.589/1.443 & 0.558/0.635/1.386 & \textbf{0.438/0.670/1.04} \\
Avg.   & 0.720/0.529/4.472 & 0.715/0.588/1.540 & 0.716/0.594/1.581 & 0.699/0.593/1.587 & 0.502/0.655/1.336 & \textbf{0.377/0.688/1.09} \\
\bottomrule
\end{tabular}
\end{table*}

\paragraph{Overview of dataset and splits.}
We evaluate OpFlow on a large commuting OD flow benchmark~\citep{rong2025large}, which contains commuting flows among 3,333 county-level areas in the United States. Each county is subdivided into multiple census blocks, between which commuting flows occur. Since we focus exclusively on intra-county commuting patterns, each county is treated as an independent environment. For each county, we construct an OD matrix where the origin and destination covariates consist of socio-demographic attributes (population structure of a region based on age, gender, income, education) and POI counts (counts of different POI categories) as origin and destination covariates, and use network distance and spatial adjacency as pairwise covariates.
We evaluate cross-county generalization using three spatial partition protocols, where each county is treated as one environment. \textsc{RandSP} randomly partitions counties into training, validation, and test sets, and serves as the base ID-style reference. 
\textsc{GeoSP} evaluates geographic OOD generalization by holding out one Census-division block at a time and training on counties from the remaining divisions. 
\textsc{OppSP} evaluates opportunity structure OOD generalization by holding out counties with atypical input-side opportunity concentration. 
Specifically, for each county, we compute a normalized Herfindahl concentration score\(C_e^{\mathrm{opp}}=\frac{\sum_v (p_v^e)^2 - 1/n_e}{1-1/n_e},\) where \(n_e\) is the number of tracts in the county. Counties in the lower and upper tails of \(C_e^{\mathrm{opp}}\) are held out for testing, corresponding to unusually dispersed and unusually concentrated opportunity structures.  Figure~\ref{fig:split-visualization} visualizes the three split protocols.

\noindent
\begin{minipage}[t]{\columnwidth} 
    \centering
    \includegraphics[width=0.9\textwidth]{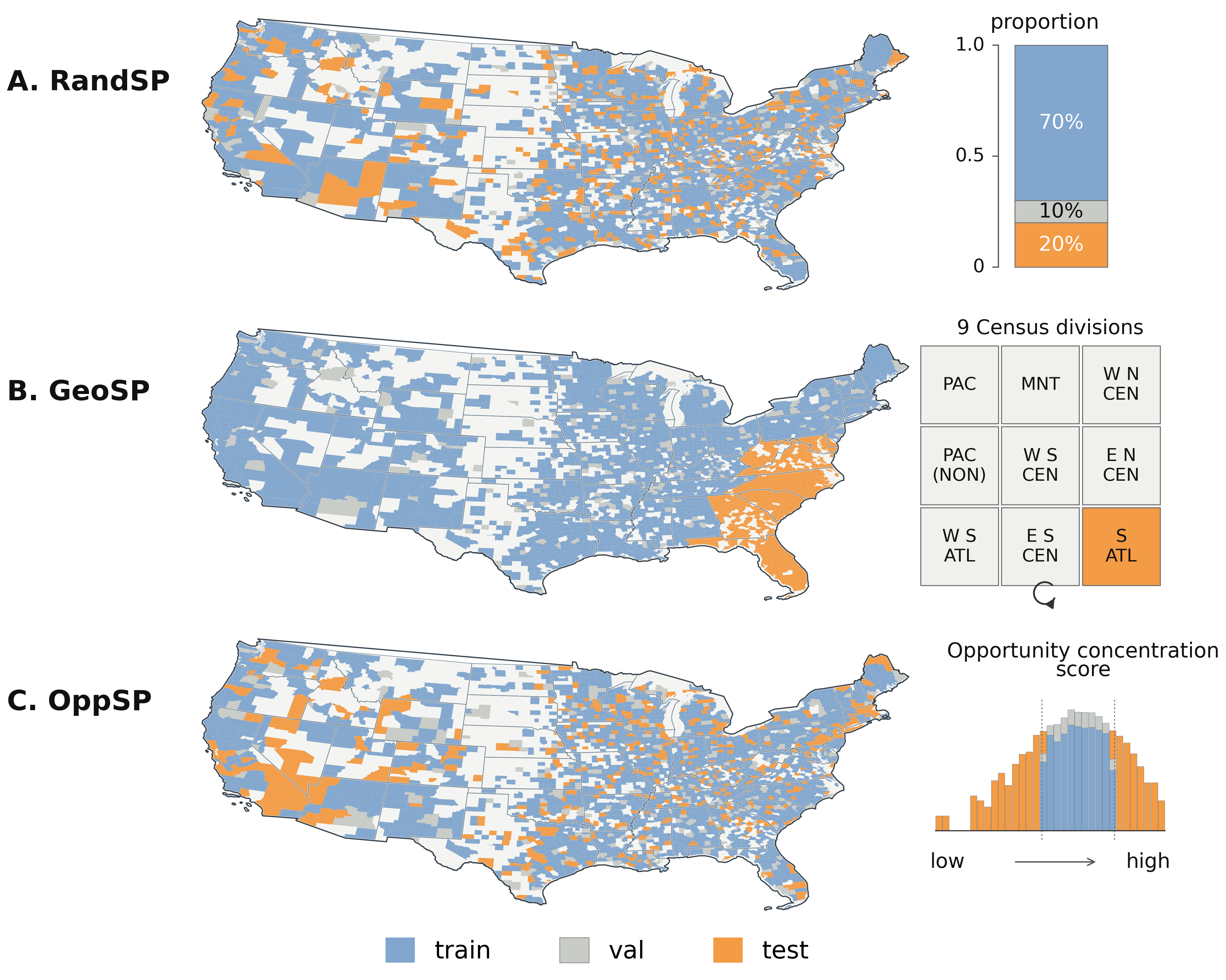}
    \captionof{figure}{Overview of the environment split protocols.}
    \label{fig:split-visualization}
\end{minipage}

\paragraph{Results.}
Table~\ref{tab:realworld-split} shows that OpFlow gives the best overall cross-county generalization. It achieves the lowest Row-KL and Log-RMSE on every split and the best average Row-CPC. On average, OpFlow reduces Row-KL from 0.410, achieved by the strongest generic baseline \textsc{PairMLP}, to 0.377, and reduces Log-RMSE from 1.20, achieved by \textsc{ScaleMLP}, to 1.09. The baseline patterns are also informative. Classical spatial-interaction models encode useful geographic priors, with Gravity being the strongest among them, but their fixed-form potentials remain systematically misspecified under cross-county heterogeneity. Generic ML models improve over classical laws by learning nonlinear associations, yet tree-based and global models still struggle to recover row-wise allocation structure. \textsc{PairMLP} and \textsc{ScaleMLP} are the closest competitors, indicating that allocation supervision and scale awareness are both important, but their higher Log-RMSE or Row-KL shows that these ingredients alone do not yield a transferable exposure mechanism. Deep/graph OD baselines capture richer OD dependencies, with GMEL being the strongest in this family, but they remain less robust than OpFlow because they do not explicitly separate origin scale from destination allocation or encode opportunity-conditioned choice structure. The consistent gains on both \textsc{GeoSP} and \textsc{OppSP} suggest that OpFlow does not merely fit county-specific flow magnitudes, but learns a more transferable allocation mechanism.

\section{Conclusion}
We tackled robust OD flow prediction by shifting to a target-level perspective. We observed that raw OD counts conflate origin scale with destination allocation, leading models to learn spurious relational shortcuts rather than transferable choice mechanisms. OpFlow resolves this by decoupling origin production from destination allocation via an exposure-to-choice mapping. This not only bridges neural prediction with classical spatial-interaction theory but also allows us to decompose OOD error into transferable allocation error, residual structural drift, and scale error, clarifying the conditions for robust transfer. Empirical evaluations show that OpFlow improves robustness under the predicted shifts, such as origin-scale variation and opportunity redistribution. Ultimately, our results highlight a crucial paradigm shift: robust spatial flow prediction requires learning the right supervised objective, not merely building stronger graph architectures. Future work will extend this mechanism-aligned framework to dynamic and intervention-driven OD systems.

\bibliography{references}

\newpage

\appendix

\section{Notations}
\label{app:notation}
This appendix summarizes the core notation used in OpFlow, see table~\ref{tab:notation-all}.

\begin{table*}[t]
\centering
\small
\caption{Summary of Core Notation.}
\label{tab:notation-all}
\renewcommand{\arraystretch}{1.25}
\begin{tabular}{@{}p{2.0cm} p{3.5cm} p{12cm}@{}}
\toprule
\textbf{Symbol} & \textbf{Name} & \textbf{Description} \\
\midrule
\multicolumn{3}{@{}l}{\textbf{Core Problem and Theoretical Notation}} \\
\midrule
$\mathcal{E}, e$ & Environment index & Spatial regimes (e.g., cities, time periods); $\mathcal{E}_{\rm tr}$ and $\mathcal{E}_{\rm te}$ denote train/deployment sets. \\
$\mathcal{D}_i^e$ & Candidate destination set & Available destinations for origin $i$ in  $e$. \\
$P_{\rm tr}, P_{\rm te}$ & Exposure distributions & Training and deployment distributions over exposure states $Z_i$. \\
$\rho$ & Density ratio bound & Upper bound on $dP_{\rm te}/dP_{\rm tr}$, measuring covariate shift severity. \\
$R_i^e, R_{\max}$ & Origin scale \& bound & Expected trip production from $i$; $R_{\max}$ is the uniform upper bound for OOD proofs. \\
$\pi_i^e, \lambda_{ij}^e$ & Allocation \& Mean flow & Destination choice probability vector; $\lambda_{ij}^e = R_i^e \pi_{ij}^e$. \\
$Z_{ij}^e$ & Exposure state & Mechanistic features: impedance $c_{ij}^e$, opportunity $o_j^e$, etc. \\
$s_{ij}^e$ & Intervening exposure & Rank-prefix cumulative opportunities: \(s_{ij}^e=\sum_{k\in\mathcal D_i^e,\;k\neq j}o_k^e\,\sigma\left(\frac{c_{ij}^e-c_{ik}^e}{T}\right).\) \\
$\ell_j^e$ & Local competition & Destination side agglomeration or competition \(\ell_j^e=\sum_{k\neq j}w_{jk}^e o_k^e\), \(w_{jk}^e=\exp\left(-\frac{d_{jk}^e}{\omega}\right).\) \\
$A_{ij}^\star$ & Latent choice intensity & True positive intensity; decomposed into $D, C, S, B$ kernels. \\
$g_{ij}^\star$ & Log-potentials & Uncentered log-intensity. \\
$F^\star, \Delta_i^e$ & Invariant law \& drift & Shared exposure-to-choice mechanism; bounded environment residual drift. \\
$\tau$ & Choice temperature & Softmax temperature controlling allocation sharpness. \\
\addlinespace[1ex]
\midrule
\multicolumn{3}{@{}l}{\textbf{Model Architecture and Training Notation}} \\
\midrule
$\Psi_\theta, S_\beta$ & Network branches & Stable potential network, Scale predictor \\
$\widehat{R}_i^e, \widehat{\pi}_{ij}^e$ & Scale \& Allocation & Predicted origin production and Softmax-normalized destination allocation. \\
${\rm sg}(\cdot)$ & Stop-gradient & Operator (e.g., \texttt{detach()}) blocking backpropagation to enforce decoupling. \\
$\mathcal{L}_{\rm alloc}$ & Allocation loss & Label-smoothed KL divergence for optimizing $\widehat{\pi}_i$. \\
$\mathcal{L}_{\rm scale}$ & Scale loss & Log-space MSE for optimizing $\widehat{R}_i$ to handle long-tail distributions. \\
$\mathcal{L}_{\rm count}$ & Count calibration & Reconstruction loss using detached allocations to physically calibrate scale. \\
$\tilde{\pi}_{ij}$ & Smoothed target & Full-support empirical target: $(Y_{ij} + \alpha q_{ij}) / (\sum_k Y_{ik} + \alpha)$. \\
LEO & Validation protocol & Leave-Environment-Out selection based solely on stable allocation KL risk. \\
\bottomrule
\end{tabular}
\end{table*}

\section{Proof of Theorem~\ref{thm:generalized-intensity}}
\label{app:proof-choice-intensity}

\begin{proof}
We suppress the environment superscript for clarity. The proof proceeds in three steps: first, we derive the opportunity-conditioned intensity from a sequential filtering mechanism; second, we show that competing opportunity arrivals induce a Softmax allocation; third, we connect the individual allocation to aggregate OD counts.

\paragraph{Step 1: From sequential filtering to the intensity.}
Fix an origin $i$, a destination $j$, and a latent micro-opportunity channel $\theta\in\Theta$. We model the generation of an effective opportunity as a sequential filtering process. A potential interaction through channel $\theta$ is realized only if it simultaneously satisfies four independent conditions: origin aspiration, travel impedance, intervening-opportunity survival, and destination acceptance.

Formally, let $dN_{ij\theta}^{\mathrm{eff}}(t)$ denote the counting measure of effective capture events. In an infinitesimal time interval $[t, t+dt]$ and latent cell $d\mu(\theta)$, the probability of a baseline channel activation is $q_i(\theta)d\mu(\theta)dt + o(dt)$. Conditional on this activation, the interaction is independently retained with probabilities $D(c_{ij})$, $C(\ell_j)$, $S(s_{ij};\theta)$, and $B(o_j;\theta)$, corresponding to travel deterrence, local context, intervening survival, and destination acceptance, respectively. 

By the chain rule for independent sequential filters, the joint survival probability is the product of the marginal probabilities. Equivalently, in the framework of point processes, this constitutes the independent thinning of a rare-event process. Thus, the expected number of effective events is
\begin{align*}
    &\mathbb{E}\!\left[dN_{ij\theta}^{\mathrm{eff}}(t)\mid Z_i\right]
= \\
&D(c_{ij})C(\ell_j)q_i(\theta)S(s_{ij};\theta)B(o_j;\theta)
\,d\mu(\theta)dt
+
o(dt).
\end{align*}
This implies that the channel-specific effective events form a Poisson process with intensity density (the Radon-Nikodym derivative with respect to $\mu \otimes \mathrm{Lebesgue}$):
\[
a_{ij}(\theta)
=
D(c_{ij})C(\ell_j)q_i(\theta)S(s_{ij};\theta)B(o_j;\theta).
\]

While distinct latent channels $\theta$ represent mutually exclusive pathways for an individual trip, at the aggregate level they act as independent sources of effective opportunities. By the superposition theorem for independent Poisson processes, the total destination-level capture rate $A_{ij}^{\star}$ is obtained by integrating the channel-specific intensity densities over the latent space $\Theta$:

\begin{align*}
    A_{ij}^{\star}&=
\int_\Theta a_{ij}(\theta)d\mu(\theta)
\\& =
D(c_{ij}) C(\ell_j)
\int_{\Theta}
q_i(\theta)S(s_{ij};\theta)B(o_j;\theta)
\,d\mu(\theta).
\end{align*}

This yields Eq.~\eqref{eq:generalized-intensity}.

\paragraph{Step 2: From competing capture rates to Softmax allocation.} Let $N_{ij}(t)$ denote the total number of effective opportunities generated by destination $j$ for a trip from origin $i$ before time $t$. By the rare-event construction above, conditional on $Z_i$, $N_{ij}(t)$ is a Poisson process with rate $A_{ij}^{\star}$. Therefore, the first effective-opportunity arrival time
\[
T_{ij}
=
\inf\{t\ge 0:N_{ij}(t)>0\}
\]
satisfies
\[
\Pr(T_{ij}>t)
=
\Pr(N_{ij}(t)=0)
=
\exp(-A_{ij}^{\star}t).
\]
Thus
\[
T_{ij}\sim\mathrm{Exp}(A_{ij}^{\star}).
\]

The traveler selects the destination whose effective opportunity arrives first:
\[
J_i=\arg\min_{j\in\mathcal D_i}T_{ij}.
\]
Because the destination-specific races are conditionally independent given $Z_i$,
\[
\Pr(J_i=j)
=
\Pr(T_{ij}<T_{ik},\forall k\neq j).
\]
Using the density of $T_{ij}$ and the survival functions of the competing races,
\[
\Pr(J_i=j)
=
\int_0^\infty
A_{ij}^{\star}e^{-A_{ij}^{\star}t}
\prod_{k\neq j}
e^{-A_{ik}^{\star}t}
\,dt.
\]
Therefore,
\[
\Pr(J_i=j)
=
\int_0^\infty
A_{ij}^{\star}
\exp\left(
-t\sum_{k\in\mathcal D_i}A_{ik}^{\star}
\right)
dt
=
\frac{A_{ij}^{\star}}
{\sum_{k\in\mathcal D_i}A_{ik}^{\star}}.
\]
Introducing a scale parameter $\tau>0$ and defining
\[
g_{ij}^{\star}
=
\tau\log A_{ij}^{\star},
\]
we obtain
\[
\frac{A_{ij}^{\star}}
{\sum_{k\in\mathcal D_i}A_{ik}^{\star}}
=
\frac{\exp(g_{ij}^{\star}/\tau)}
{\sum_{k\in\mathcal D_i}\exp(g_{ik}^{\star}/\tau)}
=
\operatorname{softmax}_j(g_i^\star/\tau).
\]
Hence
\[
\pi_{ij}^{\star}
=
\operatorname{softmax}_j(g_i^\star/\tau).
\]

\paragraph{Step 3: From individual choices to aggregate OD means.}
For the aggregate OD count, let $M_i$ denote the total number of trips generated at
origin $i$. The flow from $i$ to $j$ is
\[
Y_{ij}
=
\sum_{r=1}^{M_i}
\mathbf 1\{J_{ir}=j\}.
\]
Conditional on $M_i$ and $Z_i$,
\[
\mathbb E[Y_{ij}\mid M_i,Z_i]
=
M_i\pi_{ij}^{\star}.
\]
Taking expectation over $M_i$ gives
\[
\lambda_{ij}^{\star}
=
\mathbb E[Y_{ij}\mid Z_i]
=
\mathbb E[M_i\mid Z_i]\pi_{ij}^{\star}
=
R_i\pi_{ij}^{\star},
\]
where $R_i=\mathbb E[M_i\mid Z_i]$ is the expected total production scale at origin
$i$. This completes the proof.
\end{proof}

\paragraph{Why the intensity factorizes multiplicatively.}
The product form in Eq.~\eqref{eq:generalized-intensity} is not an arbitrary parametric choice; it follows rigorously from the logic of sequential search. For a channel-specific opportunity to capture a trip, several filters must succeed jointly: the origin must have demand for the latent channel, the destination must be reachable under travel impedance, the traveler must not be intercepted by closer opportunities, and the destination must ultimately provide an acceptable match. Since these filters are arranged as an ``AND'' sequence, their conditional survival probabilities or acceptance rates combine multiplicatively. By contrast, different latent states $\theta$ represent alternative micro-opportunity channels. Alternative channels combine via superposition, which mathematically manifests as the integral over $\Theta$. Thus, multiplication arises \emph{within} a channel due to sequential filtering, while integration arises \emph{across} channels due to latent superposition.

\paragraph{Mechanistic interpretation of the intensity.}
The aggregate rate $A_{ij}^{\star}$ represents the total rate at which destination $j$ can generate an effective opportunity for origin $i$. The latent state $\theta$ indexes micro-opportunity or matching channels, such as activity purpose, job type, service category, preference threshold, or unobserved destination quality. For a fixed channel $\theta$, the term
\[
q_i(\theta)S(s_{ij};\theta)B(o_j;\theta)
\]
combines three sequential components: the origin's aspiration for channel $\theta$, the probability that the traveler survives intervening opportunities before reaching $j$, and the destination's ability to provide an acceptable opportunity under that same channel. Furthermore, the pairwise deterrence $D(c_{ij})$ attenuates the spatial exposure between $i$ and $j$, while $C(\ell_j)$ adjusts the destination-side capture rate according to the local spatial context (e.g., agglomeration or competition). Integrating over $\Theta$ then aggregates all latent opportunity channels into a unified destination-level capture rate.

\paragraph{Connection to random utility theory.}
The same construction is strictly equivalent to a log-additive random-utility representation. At the channel level, we can define the systematic log-intensity as
\begin{equation*}
\begin{aligned}
v_{ij}(\theta)
= \tau \big[
&\log D(c_{ij}) + \log C(\ell_j) + \log q_i(\theta) \\
&+ \log S(s_{ij};\theta) + \log B(o_j;\theta)
\big].
\end{aligned}
\end{equation*}
It follows that
\begin{equation*}
\begin{aligned}
\exp(v_{ij}(\theta)/\tau)
&= D(c_{ij})C(\ell_j)q_i(\theta)
   S(s_{ij};\theta)B(o_j;\theta) \\
&= a_{ij}(\theta).
\end{aligned}
\end{equation*}
Thus, the multiplicative capture rate in intensity space corresponds exactly to an additive decomposition of systematic utility in log space. This aligns with the standard economic interpretation: separable utility components add before exponentiation, while the corresponding choice intensities multiply.

Moreover, the exponential-race formulation recovers the classical Gumbel random-utility model as an equivalent representation. Since $T_{ij}\sim\mathrm{Exp}(A_{ij}^{\star})$, let us define the utility as $U_{ij}=-\tau\log T_{ij}$. Then, for any $u \in \mathbb{R}$,
\[
\Pr(U_{ij}\le u)
=
\Pr(T_{ij}\ge e^{-u/\tau})
=
\exp(-A_{ij}^{\star}e^{-u/\tau}).
\]
This can be algebraically rewritten as
\[
\Pr(U_{ij}\le u)
=
\exp\left[
-\exp\left(
-\frac{u-\tau\log A_{ij}^{\star}}{\tau}
\right)
\right],
\]
which is precisely the cumulative distribution function of a Gumbel random variable with location $\tau\log A_{ij}^{\star}$ and scale $\tau$. Hence, we can write
\[
U_{ij}
=
\tau\log A_{ij}^{\star}
+
\varepsilon_{ij},
\qquad
\varepsilon_{ij}\sim\mathrm{Gumbel}(0,\tau).
\]
Because the transformation $-\tau\log(\cdot)$ is strictly decreasing, minimizing the arrival time is equivalent to maximizing the utility:
\[
\arg\min_jT_{ij}
=
\arg\max_jU_{ij}.
\]
Therefore, the Gumbel random-utility formulation is not imposed as an ad hoc starting point; rather, it is a natural, equivalent consequence of competing effective-opportunity arrivals.

\paragraph{Identifiability of the row-centered potential.}
The softmax allocation depends exclusively on the relative capture rates within the same origin row:
\[
\pi_{ij}^{\star}
=
\frac{A_{ij}^{\star}}{\sum_kA_{ik}^{\star}}.
\]
For any origin-specific positive constant $\kappa_i>0$, scaling the rates yields
\[
\frac{\kappa_iA_{ij}^{\star}}
{\sum_k\kappa_iA_{ik}^{\star}}
=
\frac{A_{ij}^{\star}}
{\sum_kA_{ik}^{\star}}.
\]
Thus, the absolute race speed (or baseline intensity) of an origin row is fundamentally unidentifiable from allocation data alone. In log space, this multiplicative ambiguity translates into a row-wise additive offset:
\[
\log(\kappa_iA_{ij}^{\star})
=
\log A_{ij}^{\star}+\log\kappa_i.
\]
Consequently, the identifiable behavioral target is the row-centered log-potential, defined as
\[
\psi_i^\star
=
H_ig_i^\star,
\qquad
H_i
=
I_{|\mathcal D_i|}
-
\frac{1}{|\mathcal D_i|}
\mathbf 1\mathbf 1^\top,
\]
where $g_{ij}^\star = \tau \log A_{ij}^\star$. This row-centering operation elegantly removes the unidentifiable origin-specific race speed while preserving the relative destination preferences that govern allocation. Crucially, it also decouples the transferable spatial allocation mechanism from the origin production scale $R_i$, which independently controls the total volume of flow leaving the origin.

\section{Classical Spatial Interaction Laws}
\label{app:classical-laws}

This section demonstrates that several classical spatial interaction laws arise as restricted cases of the proposed opportunity-conditioned intensity in Eq.~\eqref{eq:generalized-intensity}. Recall that the temperature-scaled potential is defined as $g_{ij}=\tau\log A_{ij}$. For the softmax allocation, it is the unscaled log-intensity, $g_{ij}/\tau = \log A_{ij}$, that directly determines the relative choice probabilities. All expressions below are understood up to row-wise additive constants (i.e., terms depending only on the origin $i$), which cancel out during row-centering.

Starting from the general form,
\begin{equation}
\label{eq:general-A-app}
A_{ij} = D(c_{ij})C(\ell_j) \int_\Theta q_i(\theta)S(s_{ij};\theta)B(o_j;\theta) \, d\mu(\theta),
\end{equation}
we consider the restricted case without latent heterogeneity. Let the state space be a singleton $\Theta=\{\theta_0\}$ with $q_i(\theta_0)=1$. The integral collapses, yielding:
\begin{align}
\frac{g_{ij}}{\tau} &= \log A_{ij} \notag \\
&= \log D(c_{ij}) + \log C(\ell_j) + \log S(s_{ij}) + \log B(o_j).
\end{align}

\paragraph{Gravity model.}
The production-constrained gravity model is recovered by choosing:
\[
D(c_{ij})=\exp(-\beta c_{ij}),
B(o_j)=o_j^\alpha,
S(s_{ij})=1, C(\ell_j)=1,
\]
with $\alpha,\beta>0$. This yields $A_{ij}=o_j^\alpha\exp(-\beta c_{ij})$, and thus:
\begin{align}
\frac{g_{ij}}{\tau} &= \alpha\log o_j - \beta c_{ij}.
\end{align}
This shows that the gravity model is a log-potential model characterized by destination attraction and travel deterrence.

\paragraph{Competing-destination / local-field model.}
A local destination-field extension (often used to capture spatial competition or agglomeration) is obtained by additionally setting:
\[
C(\ell_j)=\exp(\gamma\ell_j),
\]
while keeping $D(c_{ij})=\exp(-\beta c_{ij})$, $B(o_j)=o_j^\alpha$, and $S(s_{ij})=1$. Then $A_{ij}=o_j^\alpha\exp(-\beta c_{ij})\exp(\gamma\ell_j)$, leading to:
\begin{align}
\frac{g_{ij}}{\tau} &= \alpha\log o_j - \beta c_{ij} + \gamma\ell_j.
\end{align}
The sign of $\gamma$ determines whether the local field acts as an agglomeration benefit ($\gamma>0$) or competition pressure ($\gamma<0$).

\paragraph{Intervening-opportunity model.}
An intervening-opportunity form can be approximated by choosing:
\begin{align*}
&S(s_{ij})=\exp(-\eta s_{ij}), \ 
B(o_j)=1-\exp(-\eta o_j),\\
&D(c_{ij})=1,\quad C(\ell_j)=1,   
\end{align*}

with $\eta>0$. This gives $A_{ij} = \exp(-\eta s_{ij})\left(1-\exp(-\eta o_j)\right)$, and:
\begin{align}
\frac{g_{ij}}{\tau} &= \log\left(1-\exp(-\eta o_j)\right) - \eta s_{ij}.
\end{align}
Here, the destination receives higher intensity as its own opportunity $o_j$ increases (with diminishing returns), but lower intensity as the volume of intervening opportunities $s_{ij}$ grows.

\paragraph{Radiation model.}
Let $M_i$ denote the origin-side opportunity or population mass used in the radiation law (distinct from the learned origin-context representation $m_i$). The radiation capture term, up to an origin-specific production scale that cancels out in the softmax, can be written as:
\[
A_{ij} = \frac{o_j}{(M_i+s_{ij})(M_i+s_{ij}+o_j)}.
\]
Taking the logarithm, we obtain:
\begin{align}
\frac{g_{ij}}{\tau} &= \log o_j - \log(M_i+s_{ij}) - \log(M_i+s_{ij}+o_j).
\end{align}
This represents a threshold-based opportunity model where the choice intensity depends non-linearly on both the destination opportunities and the cumulative opportunities surrounding the origin-destination pair.

\paragraph{Relation to OpFlow.}
These derivations illustrate that classical spatial interaction laws correspond to fixed, low-dimensional, and parametric log-potential forms. OpFlow fundamentally generalizes them by replacing these handcrafted parametric components with shared neural operator channels, while allowing the origin context to dynamically modulate their response strengths. The classical laws are exactly recovered as special cases when the origin gates are fixed to constants, the residual interactions are removed, and the neural channel transforms degenerate into the parametric forms defined above.

\section{Risk Consequences and OOD Generalization}
\label{app:risk-ood}

This section proves the out-of-distribution (OOD) allocation bound and the resulting count-level decomposition used in the OOD Generalization Bounds subsection. We suppress the environment superscript when no ambiguity arises.

\subsection{Proof of Proposition~\ref{prop:ood-drift}}

\begin{proof}
Recall that the true and baseline allocation probabilities are given by
\begin{align}
\pi_{0,i} &= \operatorname{softmax}\left(H_iF^\star(Z_i)/\tau\right), \\
\pi_i^\star &= \operatorname{softmax}\left((H_iF^\star(Z_i)+\Delta_i)/\tau\right),
\end{align}
and the predicted allocation is
\begin{equation}
\widehat\pi_i^\theta = \operatorname{softmax}\left(H_iF_\theta(Z_i)/\tau\right).
\end{equation}
By the triangle inequality for the $L_1$ norm,
\begin{equation}
\left\|\pi_i^\star-\widehat\pi_i^\theta\right\|_1
\le
\left\|\pi_{0,i}-\widehat\pi_i^\theta\right\|_1
+
\left\|\pi_i^\star-\pi_{0,i}\right\|_1.
\end{equation}

We first bound the transferable allocation error. By Pinsker's inequality,
\begin{equation}
\left\|\pi_{0,i}-\widehat\pi_i^\theta\right\|_1
\le
\sqrt{
2\operatorname{KL}\left(\pi_{0,i} \,\|\, \widehat\pi_i^\theta\right)
}.
\end{equation}
Taking the expectation with respect to the test distribution $P_{\rm te}$, applying Jensen's inequality to the concave square root function, and using the density-ratio condition $\frac{dP_{\rm te}}{dP_{\rm tr}}\le \rho$, we obtain
\begin{align}
\mathbb{E}_{\rm te}
\left[
\left\|\pi_{0,i}-\widehat\pi_i^\theta\right\|_1
\right]
&\le
\sqrt{
2\mathbb{E}_{\rm te}
\left[
\operatorname{KL}\left(\pi_{0,i} \,\|\, \widehat\pi_i^\theta\right)
\right]
} \notag \\
&\le
\sqrt{
2\rho
\mathbb{E}_{\rm tr}
\left[
\operatorname{KL}\left(\pi_{0,i} \,\|\, \widehat\pi_i^\theta\right)
\right]
} \notag \\
&=
\sqrt{2\rho R_{\rm tr}^{\rm alloc}(\theta)}.
\end{align}

We next bound the residual drift term. Let $v_i=H_iF^\star(Z_i)$. Then
\begin{equation}
\pi_{0,i}=\operatorname{softmax}(v_i/\tau),
\quad
\pi_i^\star=\operatorname{softmax}((v_i+\Delta_i)/\tau).
\end{equation}
The softmax mapping is Lipschitz continuous from logits to probabilities; in particular, its Lipschitz constant with respect to the $L_2$ norm is bounded by $1$, yielding
\begin{equation}
\left\|
\operatorname{softmax}(x)-\operatorname{softmax}(y)
\right\|_2
\le
\left\|x-y\right\|_2.
\end{equation}
Therefore, using the norm equivalence $\|z\|_1 \le \sqrt{n}\|z\|_2$ for $z \in \mathbb{R}^n$ with $n=|\mathcal{D}_i|$, we have
\begin{align}
\left\|\pi_i^\star-\pi_{0,i}\right\|_1
&\le
\sqrt{|\mathcal{D}_i|}
\left\|\pi_i^\star-\pi_{0,i}\right\|_2 \notag \\
&\le
\frac{\sqrt{|\mathcal{D}_i|}}{\tau}
\left\|\Delta_i\right\|_2.
\end{align}
Taking the expectation with respect to $P_{\rm te}$ and combining the two bounds gives
\begin{equation}
\mathbb{E}_{\rm te}
\left[
\left\|\pi_i^\star-\widehat\pi_i^\theta\right\|_1
\right]
\le
\sqrt{2\rho R_{\rm tr}^{\rm alloc}(\theta)}
+
\mathbb{E}_{\rm te}
\left[
\frac{\sqrt{|\mathcal{D}_i|}}{\tau}
\left\|\Delta_i\right\|_2
\right],
\end{equation}
which completes the proof of Proposition~\ref{prop:ood-drift}.
\end{proof}

\subsection{Count-Level Consequence}

\begin{proposition}[Scale-allocation count decomposition]
\label{prop:count-decomp-app}
Let the predicted and true expected counts be defined as
\begin{equation}
\widehat\lambda_i=\widehat R_i\widehat\pi_i^\theta,
\qquad
\lambda_i^\star=R_i^\star\pi_i^\star,
\end{equation}
where $\widehat\pi_i^\theta$ and $\pi_i^\star$ are probability vectors (i.e., $\|\widehat\pi_i^\theta\|_1 = \|\pi_i^\star\|_1 = 1$). If the true production scale is bounded by $R_i^\star\le R_{\max}$, then
\begin{equation}
\mathbb{E}_{\rm te}
\left[
\left\|
\widehat\lambda_i-\lambda_i^\star
\right\|_1
\right]
\le
\mathcal{R}_{\rm te}^{\rm scale}
+
R_{\max}
\mathbb{E}_{\rm te}
\left[
\left\|
\widehat\pi_i^\theta-\pi_i^\star
\right\|_1
\right],
\end{equation}
where the scale prediction risk is defined as
\begin{equation}
\mathcal{R}_{\rm te}^{\rm scale}
=
\mathbb{E}_{\rm te}
\left[
\left|\widehat R_i-R_i^\star\right|
\right].
\end{equation}
\end{proposition}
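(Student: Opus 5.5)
The plan is a pointwise add-and-subtract decomposition followed by taking expectations under $P_{\rm te}$. Fix an origin row $i$ and insert the intermediate quantity $R_i^\star\widehat\pi_i^\theta$ (true scale, predicted allocation):
\begin{equation*}
\widehat\lambda_i-\lambda_i^\star
=
\bigl(\widehat R_i-R_i^\star\bigr)\widehat\pi_i^\theta
+
R_i^\star\bigl(\widehat\pi_i^\theta-\pi_i^\star\bigr).
\end{equation*}
The triangle inequality for $\|\cdot\|_1$ then bounds $\|\widehat\lambda_i-\lambda_i^\star\|_1$ by the sum of the $L_1$ norms of these two terms.

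For the first term, absolute homogeneity of the norm gives $|\widehat R_i-R_i^\star|\,\|\widehat\pi_i^\theta\|_1$. Since $\widehat\pi_i^\theta$ is a softmax output, it is a probability vector, so this equals $|\widehat R_i-R_i^\star|$. The nonnegativity of $\widehat\pi_i^\theta$ is what makes $\|\widehat\pi_i^\theta\|_1=1$ hold, and the stated hypothesis already guarantees it.

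For the second term, I get $R_i^\star\|\widehat\pi_i^\theta-\pi_i^\star\|_1$, using $R_i^\star\ge 0$, which holds because it is an expected trip count. The bound $R_i^\star\le R_{\max}$ then gives $R_{\max}\|\widehat\pi_i^\theta-\pi_i^\star\|_1$.

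Finally, I take $\mathbb{E}_{\rm te}$ of the pointwise inequality and use linearity and monotonicity of expectation. The first term becomes exactly $\mathcal R^{\rm scale}_{\rm te}$ by definition, and the constant $R_{\max}$ comes out of the second expectation.

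There is no real obstacle; the only care point is the choice of intermediate term. Pairing the scale error with the predicted allocation and the allocation error with the true scale is the arrangement that lets the known bound $R_{\max}$ multiply the allocation term, whereas $\widehat R_i$ is unbounded a priori. Optionally, Proposition~\ref{prop:ood-drift} can then be substituted for the allocation expectation to get a fully decomposed count bound.
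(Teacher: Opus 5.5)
Your proof is correct and takes the same route as the paper: insert $R_i^\star\widehat\pi_i^\theta$, apply the triangle inequality, use $\|\widehat\pi_i^\theta\|_1=1$ and $R_i^\star\le R_{\max}$, then take $\mathbb{E}_{\rm te}$. You also state explicitly that $R_i^\star\ge 0$ is needed to write $\|R_i^\star(\widehat\pi_i^\theta-\pi_i^\star)\|_1=R_i^\star\|\widehat\pi_i^\theta-\pi_i^\star\|_1$, which the paper uses without saying so.
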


\begin{proof}
For each origin $i$, we can decompose the $L_1$ error of the counts by adding and subtracting $R_i^\star\widehat\pi_i^\theta$:
\begin{align}
\left\|
\widehat\lambda_i-\lambda_i^\star
\right\|_1
&=
\left\|
\widehat R_i\widehat\pi_i^\theta - R_i^\star\pi_i^\star
\right\|_1 \notag \\
&\le
\left\|
\widehat R_i\widehat\pi_i^\theta - R_i^\star\widehat\pi_i^\theta
\right\|_1
+
\left\|
R_i^\star\widehat\pi_i^\theta - R_i^\star\pi_i^\star
\right\|_1 \notag \\
&=
\left|\widehat R_i-R_i^\star\right|
\left\|\widehat\pi_i^\theta\right\|_1
+
R_i^\star
\left\|
\widehat\pi_i^\theta-\pi_i^\star
\right\|_1.
\end{align}
Since $\widehat\pi_i^\theta$ is a probability vector, $\|\widehat\pi_i^\theta\|_1=1$. Using the upper bound $R_i^\star\le R_{\max}$, we obtain
\begin{equation}
\left\|
\widehat\lambda_i-\lambda_i^\star
\right\|_1
\le
\left|\widehat R_i-R_i^\star\right|
+
R_{\max}
\left\|
\widehat\pi_i^\theta-\pi_i^\star
\right\|_1.
\end{equation}
Taking the expectation with respect to $P_{\rm te}$ on both sides yields the desired result.
\end{proof}

\paragraph{Interpretation.}
The first term in Proposition~\ref{prop:ood-drift} represents the \emph{transferable allocation error}: it is controlled by the in-distribution training allocation risk and the coverage of the deployment exposure distribution by the training distribution (quantified by the density ratio bound $\rho$). The second term captures the \emph{residual structural drift} that cannot be explained by the observed exposure state. Proposition~\ref{prop:count-decomp-app} further demonstrates that the count-level OOD error additionally depends on the origin-scale prediction error. Consequently, achieving robust OD prediction under distribution shifts requires both a transferable allocation mechanism and a separately calibrated production-scale model.

\section{Supplements to the experiment}

\subsection{Metrics and Baselines}
\label{app:metrics-baselines}

\subsubsection{Metrics}
Let $\mathcal{I}_e$ denote the evaluated origins in environment $e$, and $\mathcal{D}_i^e$ the candidate destination set for origin $i$. For methods predicting counts $\widehat{\lambda}_{ij}^e$, the induced allocation is obtained via row-normalization: $\widehat{\pi}_{ij}^e = \widehat{\lambda}_{ij}^e / \sum_{k \in \mathcal{D}_i^e} \widehat{\lambda}_{ik}^e$. For methods outputting allocations directly, $\widehat{\pi}_i^e$ is used. In synthetic experiments, we evaluate against the oracle allocation $\pi_i^{e,\star}$ and oracle row-centered potential $\psi_i^{e,\star}$. In real-world settings, the evaluation target is the row-normalized observed OD flow.

\paragraph{Allocation Metrics.}
We measure allocation recovery using row-wise KL divergence (Row-KL) and row-wise Common Part of Commuters (Row-CPC) \citep{lenormand2012universal}:
\begin{align}
\mathrm{RowKL} &= \frac{1}{|\mathcal{I}_e|} \sum_{i \in \mathcal{I}_e} \sum_{j \in \mathcal{D}_i^e} \pi_{ij}^e \log \frac{\pi_{ij}^e}{\widehat{\pi}_{ij}^e}, \\
\mathrm{RowCPC} &= \frac{1}{|\mathcal{I}_e|} \sum_{i \in \mathcal{I}_e} \sum_{j \in \mathcal{D}_i^e} \min\left(\pi_{ij}^e, \widehat{\pi}_{ij}^e\right).
\end{align}
Since both vectors are probability distributions, Row-CPC equals $1 - \frac{1}{2}\|\pi_i^e - \widehat{\pi}_i^e\|_1$. Lower Row-KL and higher Row-CPC indicate better allocation recovery.

\paragraph{Mechanism Metric (Synthetic).}
To evaluate whether the model recovers the identifiable choice potential rather than merely matching the induced allocation, we report Potential MSE (PMSE):
\begin{align*}
\mathrm{PMSE}
&= \frac{1}{|\mathcal{I}_e|}
\sum_{i \in \mathcal{I}_e}
\frac{1}{|\mathcal{D}_i^e|}
\left\| \widehat{\psi}_i^e - \psi_i^{e,\star} \right\|_2^2, \notag \\
&\text{where } \widehat{\psi}_i^e = H_i u_i^e.
\end{align*}

\paragraph{Count Metrics.}
To evaluate reconstructed OD flows over pairs $\mathcal{P}_e = \{(i,j) : i \in \mathcal{I}_e, j \in \mathcal{D}_i^e\}$, we report standard RMSE and MAE. For the highly long-tailed real-world counts, we additionally report Log-RMSE:
\begin{equation*}
\begin{aligned}
\mathrm{LogRMSE}
=
\bigg[
&\frac{1}{|\mathcal{P}_e|}
\sum_{(i,j) \in \mathcal{P}_e}
\Big(
\log(1+\widehat{\lambda}_{ij}^e) \\
&\quad - \log(1+Y_{ij}^e)
\Big)^2
\bigg]^{1/2}.
\end{aligned}
\end{equation*}
While allocation metrics measure \emph{where} demand goes, count metrics additionally evaluate production-scale calibration.

\subsubsection{Synthetic Baselines}

The synthetic experiments are designed to isolate target-level, mechanism-level, and OOD-generalization effects. We compare OpFlow against the following groups (all results averaged over 30 random seeds):

\begin{itemize}
    \item \textbf{Supervision Probes:} \textsc{RawCountMLP} (a pairwise MLP trained on raw counts to test if supervision is dominated by origin-scale variation), \textsc{PairAllocMLP} (the same architecture trained on row-normalized allocations to isolate allocation-level supervision), and \textsc{ScaleAwareMLP} (predicts scale and allocation separately without structured exposure operators).
    \item \textbf{Fixed-Share \& Classical:} \textsc{ShareStable} assumes allocation shares are strictly invariant across environments. \textsc{BestClassical} selects the best-performing (via validation Row-KL) among restricted spatial-interaction laws (Gravity, Intervening-Opportunity, Radiation).
    \item \textbf{Domain Generalization (DG):} Standard DG objectives (\textsc{GroupDRO}, \textsc{VREx}, \textsc{IRM}, \textsc{DANN}) applied to the \textsc{PairAllocMLP} backbone. \textsc{BestDG} reports the best validation-selected DG method.
    \item \textbf{OpFlow Variants:} Component ablations that remove specific mechanism channels to identify their structural contributions under opportunity redistribution and exposure-correlation shifts.
\end{itemize}

\subsubsection{Real-World Baselines}

For the real-world cross-county benchmark, we compare against three families of baselines. All methods are trained and validated on the same environments, and evaluated using Row-KL, Row-CPC, and Log-RMSE across the RandSP, GeoSP, and OppSP splits.

\begin{itemize}
    \item \textbf{Classical Models:} Gravity, Intervening-Opportunity, and Radiation models. These provide interpretable geographic priors but lack the capacity to model nonlinear cross-county heterogeneity.
    \item \textbf{Generic Machine Learning:} Off-the-shelf predictors (RF, GBRT, GM) and neural baselines (\textsc{PairMLP}, \textsc{ScaleMLP}). These test whether generic nonlinear prediction or explicit scale awareness suffices without opportunity-conditioned exposure structures.
    \item \textbf{Deep \& Graph OD Models:} Representative deep, graph, and generative baselines (DGM, NetGAN, Diff, WeDAN, GMEL). While capturing rich OD dependencies, they do not explicitly decouple origin production scale from destination allocation via row-centered potentials.
\end{itemize}



\end{document}